\documentclass[11pt]{article}
\usepackage{fullpage}
\usepackage{microtype}
\usepackage{graphicx}
\usepackage{subfigure}
\usepackage{booktabs} 

\usepackage{hyperref}


\usepackage{amsmath}
\usepackage{amssymb}
\usepackage{mathtools}
\usepackage{amsthm}
\usepackage{algorithm}
\usepackage{algorithmic}
\usepackage[capitalize,noabbrev]{cleveref}
\usepackage{natbib}
\theoremstyle{plain}
\newtheorem{theorem}{Theorem}[section]

\newtheorem{lemma}[theorem]{Lemma}
\newtheorem{corollary}[theorem]{Corollary}
\theoremstyle{definition}
\newtheorem{definition}[theorem]{Definition}
\newtheorem{assumption}[theorem]{Assumption}
\theoremstyle{remark}
\newtheorem{remark}[theorem]{Remark}

\usepackage[textsize=tiny]{todonotes}

\newcommand{\T}{\mathcal{T}}
\renewcommand{\S}{\mathcal{S}}
\renewcommand{\P}{\mathbb{P}}
\newcommand{\E}{\mathbb{E}}

\title{Enhancing selectivity using Wasserstein distance based reweighing}
\author{Pratik Worah\thanks{Google inc., pworah@google.com}}
\begin{document}
\maketitle
\begin{abstract}
Given two labeled data-sets $\mathcal{S}$ and $\mathcal{T}$, we design a simple and efficient greedy algorithm to reweigh the loss function such that the limiting distribution of the neural network weights that result from training on $\mathcal{S}$ approaches the limiting distribution that would have resulted by training on $\mathcal{T}$.  

On the theoretical side, we prove that when the metric entropy of the input datasets is bounded, our greedy algorithm outputs a close to optimal reweighing, i.e., the two invariant distributions of network weights will be provably close in total variation distance. Moreover, the algorithm is simple and scalable, and we prove bounds on the efficiency of the algorithm as well.

As a motivating application, we train a neural net to recognize small molecule binders to MNK2 (a MAP Kinase, responsible for cell signaling) which are non-binders to MNK1 (a highly similar protein).
In our example dataset, of the 43 distinct small molecules predicted to be most selective from the enamine catalog, 2 small molecules were experimentally verified to be selective, i.e., they reduced the enzyme activity of MNK2 below 50\% but not MNK1, at 10$\mu$M -- a 5\% success rate.
\end{abstract}

\section{Introduction}

Deep learning has found applications in diverse areas ranging from organic chemistry to computer generated art. Its applicability is limited by the availability of large amounts of labeled training data. 
This reliance on large amounts of training data can lead to difficulties in training. For example, suppose we separately gather training datasets $\S$ and $\T$, that consist of color images of cats and dogs, for a neural net to perform two different classification tasks: (1) classify cats vs dogs and (2) classify red vs blue. Now, if we want a new classifier that uses $\S$ to more accurately distinguish red cats vs other colored cats, then the usual approach would be to train another neural net on labeled samples of red cats vs others. If the category red cats represents disproportionately few labeled samples in $\S$ compared to $\T$, then we are potentially out of luck, unless we gather more labeled data. In general, this may be unavoidable, but when the underlying ground space for the two sampled datasets is the same, then we may be able to use information from $\T$ in $\S$, especially if nearby objects are likely to be mapped to the same class. In this paper, we design a scalable algorithm (see Algorithm~\ref{algmain}) that reweighs points of $\mathcal{S}$ (using $\mathcal{T}$), so that the limiting distribution of post-training network weights using reweighted $\mathcal{S}$ is provably close to what would be obtained by training on $\mathcal{T}$;  or more generally, some user specified weighted mixture of $\S$ and $\T$ (see theorems in Section~\ref{theory:sec}). We also demonstrate the effectiveness of our reweighting algorithm in a practical application: to discover selective small molecule kinase inhibitors, and we obtain in-silico (see Figure~\ref{fig:msel}), as well as assay verified results (see Figure~\ref{fig:mcul-wetlab}).



Our main contributions are theoretical. Algorithm~\ref{algmain} reweighs the labeled training data-set $\mathcal{S}$ using the data-set $\mathcal{T}$ so that if we train a neural network on the reweighed $\mathcal{S}$ for a long enough period of time then the network weights will be "tilted" so that the classification error with respect to $\P_\T$ will be reduced. The amount of reduction is determined by the choice of tilt parameter $\alpha$ in Algorithm~\ref{algmain}. 
Theorems~\ref{wassthm1},~\ref{thm:main-eff1},~\ref{main11:thm}, and~\ref{thm:randsamp1} formally show correctness and efficiency of Algorithm~\ref{algmain}. In particular, 
\begin{itemize}
\item Theorem~\ref{wassthm1} justifies the choice of Wasserstein metric in Algorithm~\ref{algmain},
\item Theorem~\ref{main11:thm} shows that greedy algorithm has a sub-logarithmic approximation guarantee for minimum weight metric bipartite matching on the boolean hypercube, when the underlying dataset has small metric entropy (roughly equivalent to existence of a small covering). In~\cite{tarjan}, the authors showed that the greedy algorithm has poor approximation guarantees for computing minimum weight metric bipartite matchings.\footnote{At its core, computing Wasserstein distance is equivalent to computing minimum weight metric bipartite matchings, see for example~\cite{pankaja}.} Therefore, we need to assume and exploit some property of our input instances to get around the lower bound in~\cite{tarjan}. Assuming that input instances have small metric entropy turns out to be sufficient. This connection between small coverings and greedy matching algorithms is somewhat surprising, since we are not aware of results obtaining sharper guarantees on approximate minimum weight matching, based on the covering properties of the input data-set.
\item Theorem~\ref{main11:thm} and Theorem~\ref{thm:main-eff1} provide (multiplicative) approximation guarantees for the greedy and randomized greedy algorithm for approximate $1$-Wasserstein distance computation. 
\end{itemize}

{\em Organization:} In Section~\ref{sec:rel}, we discuss prior work from the areas of machine learning theory, algorithms and computational drug discovery, relevant to our paper. In Section~\ref{sec:stmt}, we present Algorithm~\ref{algmain} and provide a technical overview of our paper -- how our various theoretical results fit together. In particular, Theorem~\ref{wassthm1} explains why the Wasserstein metric is an intuitive and appropriate choice of metric for Algorithm~\ref{algmain}; Theorems~\ref{main11:thm},~\ref{thm:randsamp1} and~\ref{thm:main-eff1} show that the greedy random sampling based algorithm can compute the minimum weight bipartite matching, and hence Wasserstein distance, in near linear time; and they provide an upper-bound on the approximation error under our low metric entropy assumption. Thus showing that Algorithm~\ref{algmain} is scalable. In Section~\ref{dd:sec}, we describe an example application to drug discovery. 
Finally, the supplement describes the formal setup, theorem statements, proofs; and provides further figures and details about our drug discovery application.

\section{Related work}\label{sec:rel}

The related question of learning with differing test and train distributions has been well investigated in the machine learning community under various names, including domain shift and distribution shift, see for example the books~\cite{dataset-book} and~\cite{dabook}, the papers~\cite{shimo,rosenbaum, dudik, beckel-schaef, huang, shai} and~\cite{cc}, to name just a few. The question is also relevant to our paper since our algorithm can be used to reweigh the train data-set to bring the post training neural network weights closer to what they would have been, had we trained based on the test distribution. Some prior results rely on estimating the train and test distributions. For example, parameter estimation of the densities followed by change of variable using the Jacobian. Assuming a logarithmic number of features, that leads to a $\tilde{O}(n^2)$ algorithm for distribution skew correction ($n$ being the training and test data-set size) -- much more efficient than Wasserstein distance computation that requires solving a $\Theta(n^2)$ sized linear program. 
However, in high dimensional feature spaces, the number of samples required increases exponentially in number of features, for any formal guarantee for density estimation (as can be seen from large deviation bounds~\cite{dz}). Moreover, if we are only interested in partial tilting of one distribution towards another, then it is reasonable to look for approximate but efficient computation of Wasserstein distance. That is what we do in this paper using Algorithm~\ref{algmain}. Algorithms for distribution shift correction are applied to domain adaptation as well, as the two problems are very similar. The amount of literature in domain shift and distribution shift is vast. However, the only prior theoretical works involving Wasserstein distance computation that we found in this area were~\cite{courty} and~\cite{LeDo}, which focus on exact solution of the Wasserstein distance problem.

The problem of efficient Wasserstein distance computation has also received much attention in the algorithms community. The paper~\cite{pankaja} studies the equivalence between Wasserstein distance computation and matching algorithms in the metric space setting. Efficient matching algorithms have been well studied in literature for five decades. The optimal algorithm for computing weighted matchings is due to Gabow and Tarjan~\cite{gabow} and runs in time $O(m\sqrt{n})$, where $m$ is the number of edges and $n$ the number of vertices in the graph. Since then more sophisticated algorithms have been designed, see for example~~\cite{vaidya, hyper, pankaja} and~\cite{indyk} to name a few. However, under our assumptions even the simple greedy algorithm performs remarkably well, and it scales efficiently for large training data-sets. 

Note that~\cite{tarjan} showed that the greedy algorithm has an abysmal approximation ratio of $n^{\log_2 3/2}$ for bipartite graphs. In this paper, we show in Theorem~\ref{main11:thm} that the approximation ratio of the greedy algorithm is much better under our bounded metric entropy assumption than the lower bound in~\cite{tarjan}. Hence, an assumption about a covering property of the input leads to more optimal matchings  -- a somewhat surprising algorithmic result that may be of independent interest.

In the context of approximate Wasserstein distance computation, we are also aware of the Sinkhorn algorithm. Sinkhorn distance computation (gradient ascent) can take about a second for 2K points,\footnote{For a comparison of various Sinkhorn distance algorithms, see Figure 4 in~\cite{sk1}.} which is likely going to be slower than most implementations of the greedy matching algorithm (sorting). Sinkhorn accuracy can be traded-off with computation time. However, as accuracy is decreased, the issue of worst case approximation guarantee becomes relevant. We are aware of additive approximation guarantees for Sinkhorn (see Theorem 1 in~\cite{sk2}) but not multiplicative ones. Directly interpreting their result, in our setting (the $d$ dimensional hypercube), their additive guarantee is: $\Theta(d\log d)$, while the diameter of the space is $d$. Hence the trivial upper-bound on the Wasserstein distance is $O(d)$, which makes their guarantee not useful for us. We are not aware of worst case multiplicative approximation guarantees for the Sinkhorn algorithm. We show worst case multiplicative approximation guarantees for the greedy algorithm in Theorem~\ref{thm:main-eff1}. The approximation factor can be informally summarized as: $O(d^c)$ for some constant $c<1$. The exact $c$ achieved depends upon the size and radii of the balls in the optimal covering, which may be unknown, and an upper bound is used in the statement. Obtaining guarantees for the Sinkhorn algorithm, under the small covering assumption here, is an interesting open problem.

One can use different objectives in the problem setting, including reweighing to minimize the generalization error.
Our objective is to bring the limiting distribution of neural net weight parameters closer.\footnote{It is worth noting here that while it is possible to construct examples (like the exponential function) where small weight perturbations lead to large perturbations in output; such (non-robust) neural nets are less likely to be useful.
For example, robustness (to weight perturbations) is desirable for model compressibility (see the discussion in section "Related works" of~\cite{tsai}). Moreover, generalization error of SGD is upper-bounded by the sensitivity (opposite of robustness) of the square of the gradient under weight perturbations (see Theorem 1 in~\cite{neu}).
Thus the distribution of the invariant measures of weights being close is a reasonable metric for neural net weight parameters; as robustness to weight perturbations is desirable for neural nets, for reasons above. } 
In~\cite{neu}, they explore upper-bounds on the generalization error in terms of the statistical properties of stochastic gradient descent (SGD).\footnote{Their bound is in terms of the variance of the gradients around the local minima. We suspect that using it together with the bound in Theorem~\ref{wassthm1} in this paper, would lead to a similar bound on the generalization error (in the limit as $t\to\infty$ and step-size is small.) but now as an expectation over network weights, where the expectation is with respect to the invariant measure of the limiting SDE corresponding to the SGD.} We do not know of any bounds in the reverse direction, but it seems to be a harder problem to judge how far apart are the network weights given the generalization error is small or large.
Discrepancy minimization for reducing generalization error has been explored recently in the context of a very similar problem to ours (see~\cite{awasthi}, and the references therein). Such approaches usually solve a convex program, while we solve a nearest neighbor problem. The latter is more tractable for large datasets. Moreover, such approaches often do not give an approximation guarantee, but we do. Furthermore,~\cite{awasthi} does not take into account the training algorithm, i.e., that we used stochastic gradient descent (SGD) to train our model. The choice of SGD for training motivated our choice of Wasserstein distance in Algorithm~\ref{algmain} (see Theorem~\ref{wassthm1}).

Finally, the idea of using deep learning for drug discovery has gained popularity in pharmaceutical research over the last few years, especially given the amount of data now available~\cite{nat-drug-disc}. The paper~\cite{pfr} shows that neural nets can be trained on DNA encoded chemical libraries to identify new small molecules that bind to a given protein target. It is particularly relevant to this work, as we build upon that. Our work extends their work by allowing us to select molecules that bind to one protein target and not to another. Other papers in this rapidly growing area include~\cite{kearnes, kearns16} and~\cite{gilmer17a}.

\section{Problem statement and overview of results}\label{sec:stmt}

Suppose we are given two training data sets, say $\mathcal{S}$ and $\mathcal{T}$, consisting of $\Theta(n)$ points each, for two different classification tasks. Assume that the labels of $\mathcal{S}$ are known, and the labels of $\T$ are unknown.\footnote{Or the labels of $\T$ may be known, but $|\T\cap\S|$ is small; in either case simply computing a weighted average of $\P_\S$ and $\P_\T$ is not possible.} Moreover, let's also assume that the datasets consist of discrete points that are a subset of a $d=\Theta(\log n)$ dimensional boolean hypercube. The points of $\S$ and $\T$ are weighted according to probability distributions, say $\P_\mathcal{S}$ and $\P_\mathcal{T}$ respectively. Our goal is to train a neural network classifier using the labeled dataset $\S$ with a reweighed distribution $\P_\S'$ (instead of using $\P_\S$), so that for training using the basic Stochastic Gradient Descent (SGD)~\citep{robins} with mean squared error loss, the limiting distribution\footnote{In the limit as training time goes to $\infty$ and step size goes to $0$.} of neural network weight parameters is closer (in $\ell_1$ distance) to the one that would be obtained had we trained using $\T$ and $\P_\mathcal{T}$.

Furthermore, in general, one may not want to reweigh $\P_\S$ so that the resulting trained neural net behaves as if trained on $\P_\T$, but only tilt $\P_\S$ towards $\P_\T$ to achieve part of that effect. For example, reweigh $\P_\S$ to a distribution $\P_\S'$ so that the post-training weight parameters of the neural network have the limiting distribution that would have resulted from the training set weights set to $(1-\alpha)\P_\S+\alpha\P_\T$. This is the case with our drug discovery example.

The boolean hypercube assumption is critically used in the proof of Theorem~\ref{main11:thm} (in Lemma~\ref{sl1}). But, it is also worth noting here that restricting the state space to a boolean hypercube is not as limiting as it may seem. First, it is natural for some applications (like drug discovery where molecules are represented as binary fingerprints). Second, the main constraint in the boolean hypercube assumption is that the underlying metric becomes $\ell_1$  (as binary strings can encode most reasonable inputs after discretization). However, it is well known (starting from~\cite{bourgain}) that one can embed arbitrary metrics into $\ell_1$ with low distortion. Thus, even if the domain is not the hypercube, one can preprocess and embed it into the hypercube (with a small loss of approximation factor) and this would be better than the baseline, i.e., not reweighting. Even without any preprocessing, if there is intuitive reason to believe that the distortion will be low (the input metric is already close to $\ell_1$), then it is likely worth reweighting using the greedy algorithm, than not reweighting.

\subsection{Drug Discovery example}
To illustrate with an example, for our drug discovery application in Section~\ref{dd:sec}: the labeled training set $\S$ consists of a subset of small molecules that are binders and non-binders for the protein MNK2, and the set $\mathcal{T}$ consists of a subset of small molecules\footnote{As an aside, each small molecule is usually mapped to a 2K character long binary string (fingerprint) of features. Thus, in this context, one may think of the underlying space of small molecules as a subset of the boolean hypercube in dimension 2K.} labeled non-binders (non-hits) for the protein MNK1. Here the labels of the molecules in $\mathcal{T}$ are known but not necessarily on the same molecules as $\mathcal{S}$ (since the data is collected at different times with different assays). The corresponding weight distributions $\P_\S$ and $\P_\T$ may be assumed to be uniform distributions supported on $\S$ and $\T$ respectively.

Given a new set of small molecules, one now wants to rank them so that first and foremost it is a binder for MNK2 and within that ranking we also want the non-binders for MNK1 to rank higher. Such models can allow us to make predictions on large commercially available catalogs and enrich compounds that have high likelihood to bind to MNK2 but not MNK1. 

Therefore, we train a neural net using $\S$ to predict binders and non-binders to MNK2, but at the same time we want to take into account the binder and non-binders to MNK1. One way to accomplish this is to reweigh the labeled example points in $\P_\S$ using $\P_\T$, so that points in $\S$ close to those in $\T$ receive higher weight in the reweighed distribution, denoted $\P_\S'$. Training the neural net on $\P_\S'$ should then achieve our goal. However, the question is how much to change the weight of each point in $\S$, especially when the datasets $\S$ and $\T$ can be huge. Our algorithm below suggests one scalable approach to the problem. This formulates the drug discovery application as an instance of our problem statement. See section~\ref{dd:sec} for details.

\subsection{Reweighing algorithm}

The reweighing algorithm (Algorithm~\ref{algmain}) solves the problem above of how to reweigh each point in $\S$ so that the trained neural net behaves as if trained on $(1-\alpha)\P_\S+\alpha\P_\T$.
\begin{algorithm}[htb]
\caption{Reweigh Distribution and Train}\label{algmain}
\begin{algorithmic}[1]
\STATE {\bf Input:} Two data-sets: $\mathcal{S}$ and $\mathcal{T}$ of size $n$ each, points weighed according to $\P_\mathcal{S}$ and $\P_\mathcal{T}$ respectively, and a tilt factor $\alpha\in[0,1]$.
\STATE {\bf Output:} Compute a distribution $\P_\S'$ on $\S$ such that the invariant distribution of network weights of a neural net model, trained using SGD with dataset $\S$ and weights $\P_\mathcal{S}'$, will be closer (in Wasserstein metric) to the invariant distribution of network weights of a neural net model trained using SGD on $\T$ with points weighted as $\P_\mathcal{T}$.\\
$\triangleright$ {\bf Algorithm starts:}\\
$\triangleright$ RandomSample returns an empirical probability distribution computed from sample size $m$.\\
$\triangleright$  $R_\mathcal{S}\subseteq\mathcal{S}$ and $R_\mathcal{T}\subseteq\mathcal{T}$ denote the random sample of points from their respective ground sets.
\STATE $\P_{R_\mathcal{S}}:=\mathrm{RandomSample}_m(\S,\P_\mathcal{S})$ 
\STATE $\P_{R_\mathcal{T}}:=\mathrm{RandomSample}_m(\T,\P_\mathcal{T})$\\
$\triangleright$ Obtain a $\alpha$-tilted version of $\P_{R_\mathcal{S}}$ that's close to $\P_{R_\mathcal{T}}$ using greedy minimum weight metric bipartite matching algorithm, described as GreedyAlgorithm (Algorithm~\ref{alg2}) in supplement
\STATE $\P_{R_\mathcal{S}}':=\mathrm{GreedyAlgorithm}
(\P_{R_\mathcal{S}},\P_{R_\mathcal{T}}, \alpha)$\\
$\triangleright$ Obtain a reweighted version of $\mathcal{S}$ 
\STATE $\P_\mathcal{S}' = (1-\alpha)\P_\mathcal{S} + \alpha\P_{R_\mathcal{S}}'$.\\
$\triangleright$ Train neural net on $\P_\mathcal{S}'$.
\STATE Use stochastic gradient descent (SGD) to train the neural net using $\P_\mathcal{S}'$.
\end{algorithmic}
\end{algorithm}

\subsection{Theoretical results}\label{theory:sec}
The rest of this section concentrates on providing a theoretical explanation for why Algorithm~\ref{algmain} should work as intended and scale computationally.

\subsubsection{Choice of $1$-Wasserstein metric}
In our problem statement, a difference in $\P_\mathcal{S}$ and $\P_\mathcal{T}$ results in a difference in the convergence point of the weights in any neural net training procedure, like SGD. This is because the loss functions in the SGD algorithm will differ in the weights of their summand terms, even though they may have the same form. Therefore, given two mean squared error loss functions weighted with different probability distributions, say $\P_\mathcal{S}$ and $\P_\mathcal{T}$, on each of their terms, a natural question is: what is the relation between the limiting distribution of network weight parameters of two neural nets that are trained using the two differently weighed loss functions?

Our first theoretical contribution, Theorem~\ref{wassthm1}, shows that $W_1(\P_\mathcal{S},\P_\mathcal{T})$, the 1-Wasserstein distance between the loss function weight distributions $\P_\mathcal{S}$ and $\P_\mathcal{T}$, upper-bounds the total variation distance between the invariant measures of two such neural nets under a covariate shift like assumption. Note that the covariate shift assumption is used in distribution shift correction literature, see for example~\cite{beckel}. More formally:

\begin{assumption}\label{covshft1:assm}
Let $f(w,x)$ be the neural network output, for weights $w$, input $x$ and corresponding label $y$. We assume that $x\in Q_d$ (the $d$ dimensional boolean hypercube), $f$ and $y$ are bounded, $y,f(w,x)\in[0,1]$ and for all $w$:
\begin{equation}
{\scriptstyle
\left|\E_{y\sim\P_\S(\cdot|x)}[(y-f(w,x))^2] - \E_{y\sim\P_\T(\cdot|x)}[(y-f(w,x))^2]\right|=O(1).
}
\end{equation}    
\end{assumption}
Assumption~\ref{covshft1:assm} is weaker than the usual covariate shift assumption, since in the latter case, the RHS of Equation~\ref{covcond1} would equal $0$.\footnote{The assumption can be further weakened by not requiring that it needs to hold for all $w$, but only $w$ near local optima; see the discussion in the supplement Section~\ref{wass:sec}.} Under Assumption~\ref{covshft1:assm}, we show the following theorem:
\begin{theorem}(see Theorem~\ref{wassthm} for precise statement)\label{wassthm1}
Suppose we train two neural networks, such that (1) the limiting stochastic differential equation (SDE) corresponding to the training SGD (as SGD step-size goes to $0$) is strongly elliptic,\footnote{This ensures the invariant measure of the SDE exists, is smooth and unique.} and (2)  Assumption~\ref{covshft1:assm} holds, on different input distributions, $\P_\T$ and $\P_\S$, using the stochastic gradient descent (SGD) algorithm. Then, if  $W_1(\P_\S,\P_\T)=\Omega(1)$ (the interesting case of our problem), the total variation distance between their invariant measures can be bounded by $O(W_{1}(\P_\T,\P_\S))$, for the limiting SDE of the SGD.
\end{theorem}
Therefore, the above explains our choice of the $1$-Wassertein metric as the metric to use in the greedy minimum weight metric bipartite matching computation in Algorithm~\ref{algmain}.

\subsubsection{Metric bipartite matching}
The next question is, suppose we want to compute a distribution $\P_\mathcal{S}'$ with set of support $\S$, such that it minimizes $1$-Wasserstein distance between $(1-\alpha)\P_\mathcal{S} + \alpha\P_\mathcal{T}$ and $\P_\mathcal{S}'$, for some fixed choice of tilt factor $\alpha\in[0,1]$.\footnote{
The optimum value of $\alpha$ can be chosen by trial and error after running multiple training and validations, to reduce any over-fitting.
} 
We would then use $\P_\mathcal{S}'$ as the new set of weights for neural net training. 

While the optimal $\P_\S'$ mentioned above can be computed by solving a linear program (LP) that closely resembles the $1$-Wasserstein distance computation LP, the number of constraints would be quadratic in the size of the data-sets, making the computation intractable for large datasets.\footnote{A typical large data-set has 10-100M examples, and computing $W_1$ over two such data-sets requires solving a linear program -- a $\Theta(n^3)$ time procedure, resulting in the order of $10^{24}$ computational operations!} Therefore, we look for inaccurate but efficient algorithms and a natural candidate is the randomized greedy algorithm for minimum weight metric bipartite matching, for reasons explained below.

Given a bipartite graph with vertices embedded in a metric space, the {\em metric minimum weight bipartite matching problem} asks to compute a minimum weight matching, where the weight of a matching is the sum of the lengths of edges in the matching.

The 1-Wasserstein metric has an equivalent interpretation as an optimal transport problem. In fact, it is equivalent to solving the metric minimum weight bipartite matching problem~\cite{tarjan}. The reduction is fairly intuitive and consists of duplicating the supply and demand points, in the optimal transport problem, in proportion to their weights in the dataset. Theorem~\ref{thm:ps} (essentially repeated from~\cite{pankaja}) provides a formal statement reducing the former to the latter.

One tractable way to compute a minimum weight bipartite matching is to use a faster but sub-optimal algorithm. The greedy algorithm, formally studied by~\cite{tarjan} in this context, is a natural contender. In this paper, we show that if our input instances admit a small sized covering then the greedy algorithm, i.e., Algorithm~\ref{algmain}, provides a better approximation guarantee than the worst case lower bound from~\cite{tarjan}. Our main contribution here is Theorem~\ref{thm:main-eff1}, which is just a conjunction of Theorems~\ref{main11:thm} and~\ref{thm:randsamp1}. Theorem~\ref{main11:thm} provides the combinatorial argument around the for the greedy algorithm under the small covering assumption, and is discussed in Subsection~\ref{smallcov:sbs}. Theorem~\ref{thm:main-eff1} can be informally stated as follows.
\begin{theorem} (see Theorem~\ref{thm:main-eff} for general statement with trade-offs)\label{thm:main-eff1}
Suppose we are given two data-sets with $\mathcal{S}$ and $\mathcal{T}$ that are weighted according to distributions $\P_\mathcal{S}$ and $\P_\mathcal{T}$. If, 
\begin{enumerate}
\item Small covering: $\S\cup \T$ admits a covering with $\eta$ $\ell_1$-balls of radius $\zeta$; with $\eta,\zeta=O(\log^c n)$, and $c \le \frac{1}{2(1+\log_2(3/2))}$; and
\item $\P_\S$ and $\P_\T$ are sufficiently far apart: $W_1(\P_\mathcal{S},\P_\mathcal{T})\ge\log\log n +S_n$, where $S_n$ measures how spread out the the covering balls are (see Definition~\ref{dfn:spread}).
\end{enumerate}
then the greedy algorithm achieves an approximation ratio of $O(d^{c'})$ for some $c'\le 0.73$,\footnote{The value $0.73$ comes from using $\xi=2$ in the bound in Theorem~\ref{main11:thm}.} with probability $1-o(1)$, when computed on a small random sample of $r(n)$ fraction of data-points and $r(n)\to 0$.

\end{theorem}

\subsubsection{Small covering assumption}\label{smallcov:sbs}
So, the question arises: What does a small covering assumption above mean in the context of minimum weight metric bipartite matching algorithms, and what is its underlying combinatorial connection to such matchings?

One way to specify small coverings is via {\em metric entropy} -- the minimum number of balls of a given radius required to cover the point set (see Definition~\ref{dfn:me}). It turns out that for computing minimum weight bipartite matchings on pointsets with low metric entropy, the greedy algorithm of~\cite{tarjan} performs provably well. 
\begin{theorem} (see Theorem~\ref{main1:thm})\label{main11:thm}
For $d=\Theta(\log n)$, let $\eta = O(d^{\frac{1}{\xi\log_2 3/2}})$ and $\zeta=O(\eta)$. Suppose that the input points can be covered by $\eta$ $\ell_1$-balls of radius $\zeta$, then the greedy algorithm achieves an approximation factor of $\max\{2\zeta, O\left(d^{\frac{1+\xi\log_2(3/2)}{\xi(1+\log_2(3/2))}}\right)\}$ for $\xi>1$, on points on the $d$ dimensional hypercube. 
\end{theorem}

Note that for points on the $d$ dimensional hypercube, an approximation factor of $d$ for minium weight metric bipartite matching is trivial, but we improve it to $O(d^{c})$ for $c < 1$, exact $c$ depends on the metric entropy of the dataset (see Corollary~\ref{cor:nontriv} and Theorem~\ref{main1:thm} for precise trade-offs).

The crux of the proof of Theorem~\ref{main11:thm} consists of a structural characterization of alternating cycles\footnote{An {\em alternating cycle} in a matching is simply a cycle consisting of alternate matched and unmatched edges.} and matchings (Lemma~\ref{sl1}) that may be of independent interest. The key to the proof of Lemma~\ref{sl1} is the following (informal) idea: Let $\gamma$ be an alternating cycle in the greedy matching. Suppose that the sum of weights of the matched edges between vertices in $\gamma$ is $\kappa$ times their weight in the minimum weight matching, i.e., the cycle $\gamma$ is long. But, how can a cycle be long in a metric space like the hypercube, which has diameter $\log n$, and number of vertices $n$? The answer is that the cycle $\gamma$ must have many long edges. That together with the assumption that there exists a small covering implies a contradiction for an appropriate choice of $\kappa$. 

\subsubsection{Greedy on random sample}
Algorithm~\ref{algmain} uses the greedy algorithm on top of a small random sample of the datasets to deal with the quadratic time complexity when datasets $\S$ and $\T$ are large. Therefore, we show that using a small random sample does not lead to large deterioration in the approximation guarantee.
Theorem~\ref{thm:randsamp1} states that using random samples for datasets with bounded metric entropy do not lead to a much worse approximation guarantee, if the $W_1$ distance between the empirical distribution computed from $m$ samples ($\hat{\mu}_m$) and the true distribution ($\mu$) is known to be not too small (which is the interesting case of the problem).

\begin{theorem} (Informal; see Theorem~\ref{thm:randsamp} for precise formal statement)\label{thm:randsamp1}
For a dataset with $\log^{O(1)}n$ metric entropy with $\ell_1$ balls of $\log^{O(1)}n$ radius, and a random sample of size $m=o(n)$, the 1-Wasserstein distance between the empirical distribution and the true distribution of data-sets with bounded metric entropy obeys the following Sanov type concentration bound:
\begin{equation}
{\scriptstyle
      \exists\ m=o(n),\ \lim_{n\to\infty}\frac{1}{m}\ln \mathbb{P}\left(W_1(\hat{\mu}_m, \mu) \ge \log\log n+ S_n\right) \le -\Omega(1),
}
\end{equation}
where $S_n$ and it measures how spread out the the covering balls are (see Definition~\ref{dfn:spread}).
\end{theorem}

Note that, $S_n\le\log n$, for the hypercube, so when $S_n=o(\log n)$ but the diameter of our point set is $O(\log n)$, i.e., most points are clustered around some $x_0$ except for small fraction of outliers, then one can use a random sample to reduce the input size further without deterioration of the worst case approximation factor.

For the proof of Theorem~\ref{thm:randsamp1}, we need large deviation bounds for the $1$-Wasserstein distance between the theoretical distribution and its empirical distribution. Such results have been explored previously with tight Sanov's theorem type bounds in low dimensional spaces (see for example~\cite{bgv}). However, our underlying space has large dimension, i.e., $\log n$, which depends upon $n$. The constants in the exponential in the theorems in~\cite{bgv} will thus depend on $n$, and it's not immediately clear to us whether the dependence can be easily removed. Hence, we need the assumption of low metric entropy for the same results to go through (see chapter 6 in~\cite{dz}).

\section{Example application: drug discovery}\label{dd:sec}

A natural question is, when does the small covering assumption hold in practice? This seems to happen in the drug discovery setting.\footnote{The combinatorial synthesis process utilized in DNA encoded library (DEL) compounds often results in local chemical similarity among compounds that share common building blocks. Since similar molecules likely have the same binding behavior, synthesized molecules form a small ball around a parent molecule in the molecule fingerprint space. Therefore, molecule binding vs non-binding data-sets likely have low metric entropy.\label{fndellabel}} So, as a concrete motivating example, we illustrate an application of Algorithm~\ref{algmain} to a toy problem in the drug discovery area (see also Section~\ref{sec:expt}).

In drug discovery, typically, one wants to isolate {\em selective} small molecules (inhibitors) that bind strongly to a given enzyme, but often we want to exclude small molecules that bind to another similar enzyme. For example, MNK1 and MNK2 are two structurally similar kinases (a kinase is an enzyme for phosphorylation or de-phosphorylation of proteins). We want to identify small molecules that bind strongly to MNK2 (MNK2 hits), but we also prefer that the identified small molecules not bind to MNK1 (MNK1 non-hits). In other words, we want to isolate molecules that are selective for MNK2 over MNK1. 

{\em Performance improvements:} In the in-silico experiments, we were able to increase the percentage of MNK1 non-hits in our set of top predicted MNK2 hits -- the selectivity -- from 54\% to 95\% on holdout data, using the reweighing procedure in Algorithm~\ref{algmain}. We are not aware of other such multi-target prediction results in DNA encoded library (DEL) space (see~\cite{satz} for background), where one simultaneously predicts hits/non-hits against two or more proteins. However, the success rates for single target experiments with traditional high-throughput screening is $\sim1\%$ (see for example the discussion in~\cite{pfr}) and it is generally accepted that multi-target prediction is a harder problem. 

{\em Training:} We used a relatively small training set of about $250$K small molecules in total; labeled as MNK1 non-binders, and MNK2 binders as well as non-binders. To evaluate the effect of reweighting on the selectivity\footnote{Recall that, we want to find small molecules that are MNK2 binders but preferrably MNK1 non-binders.} of predicted MNK2 binders, we use a small holdout set of $7$K small molecules that consists of molecules which are labeled as: MNK2 hits (binders), and MNK1 hits (binders) or MNK1 non-hits (non-binders). In Figure~\ref{fig:msel}, for the neural network models with and without reweighing, we plot the cumulative number of MNK1 non-hits on the $y$-axis; and on the $x$-axis any given point, say $k$, represents the top $k$ predicted MNK2 hits from the examples in the holdout set. 
\begin{figure}[htb]
\centering
\includegraphics[scale=0.35]{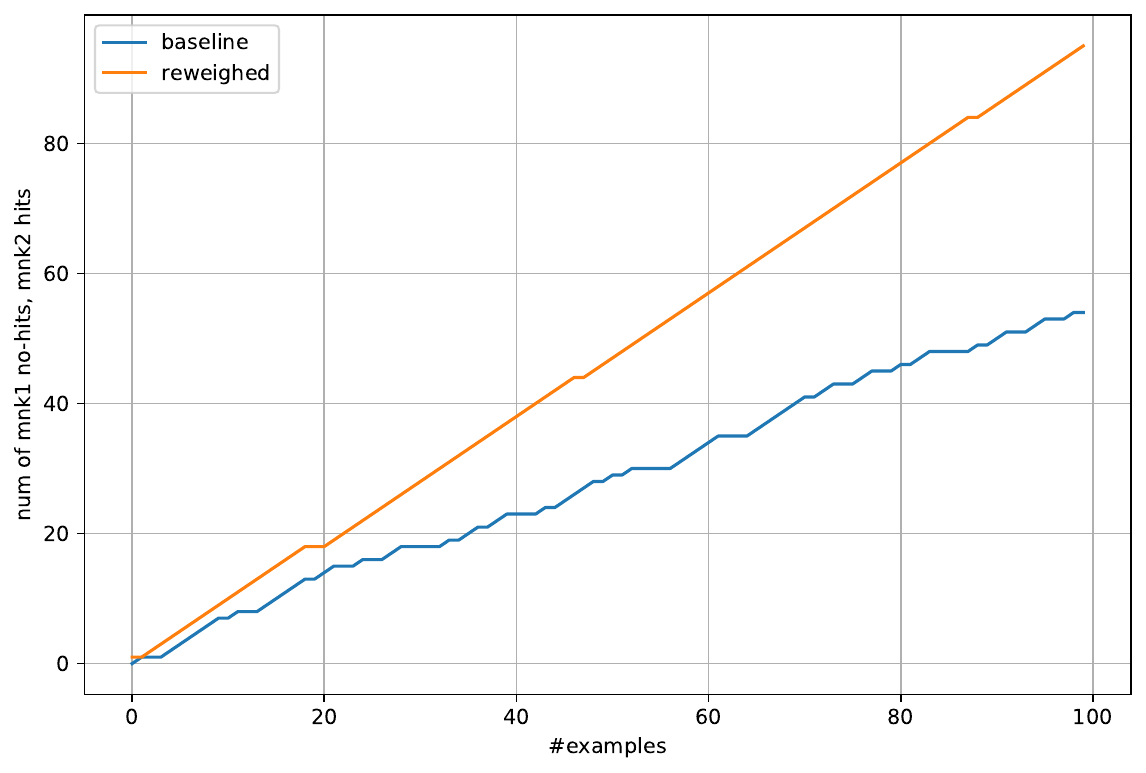}
\caption{Selectivity of reweighed (using Algorithm~\ref{algmain}) and baseline (without reweighing) neural nets. Note that this increase in selectivity from 54\% to 95\% came without any significant change in the validation loss -- the AUC for the classification of MNK2 binders vs non-binders remained around $0.6$ in both cases.}
\label{fig:msel}
\end{figure}
While we can not make our training data-sets and code public for proprietary reasons, we were able to experimentally (in wet-lab) verify that two out of the top fifty (actually 43, since 7 out of 50 molecules could not be synthesized and tested) predicted selective small molecules, obtained by running our neural network model on the Enamine 1.9B molecules catalog (\url{https://enamine.net}), were indeed selective for MNK2 over MNK1.\footnote{The compounds are Z1918489591 and Z5890616727 in the enamine catalog.} That is a success rate of roughly 5\% on this admittedly small sample set. We do note that the results are from a single point concentration assay and can be noisy.

{\em Assay based validation:} More importantly, the two predicted and assay tested molecules in Figure~\ref{fig:mcul-wetlab} provide a degree of verification for our experimental application (which has been the motivation, but is not the focus of this paper). 
\begin{figure}[htb]
\centering
    \includegraphics[scale=0.2]{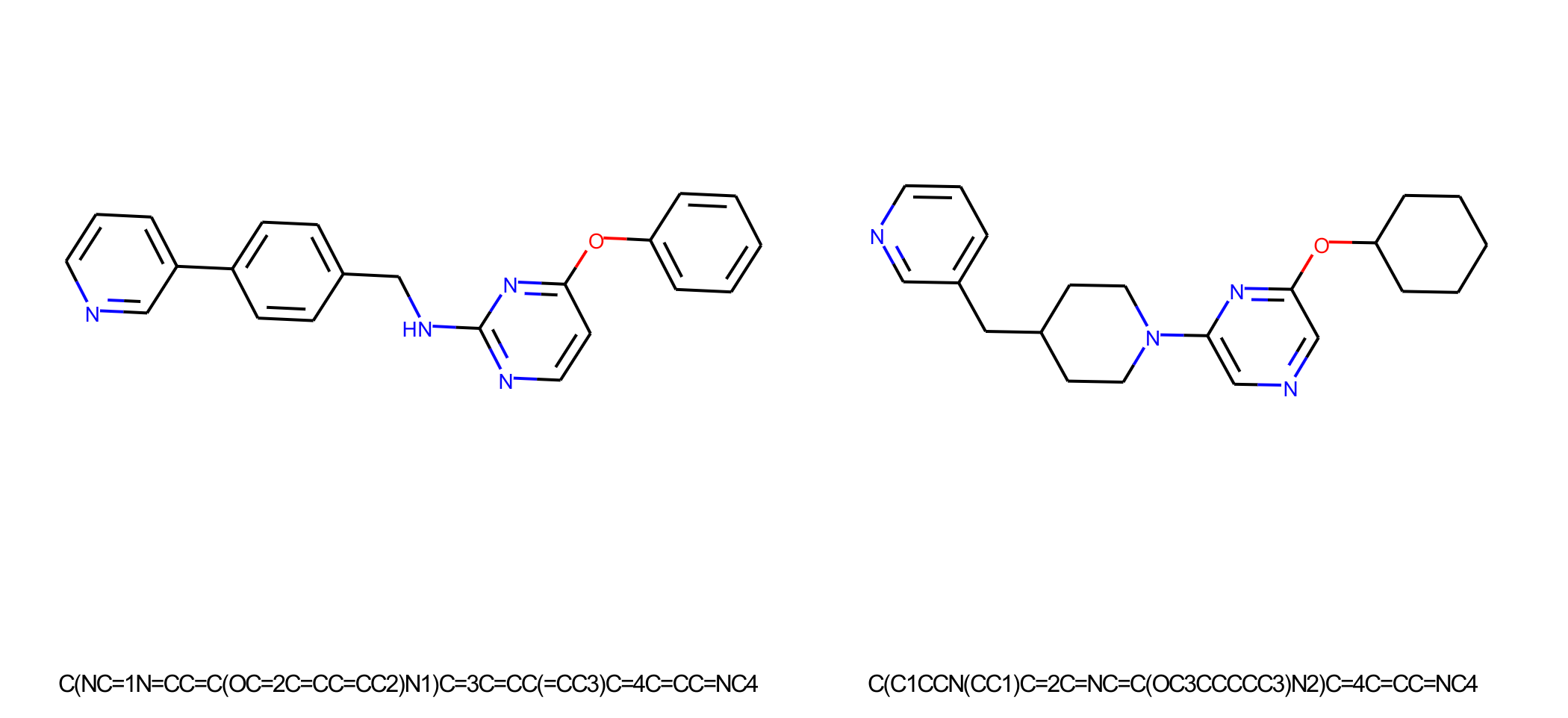}
\caption{Two predicted and verified selective MNK1 non-hits and MNK2 hits from the Enamine catalog. The enzyme activity was found to be above 50\% for MNK1 but below 50\% for MNK2 at $10\mu$M concentration for each of the two small molecules: $\sim20\%$ vs $70\%$ and $\sim39\%$ vs $59\%$. Note that these values are from single point concentration assay and can be noisy.}
\label{fig:mcul-wetlab}
\end{figure}

{\em Effect of reweighing:} The main aim of the following discussion is to justify Remark~\ref{rmk:mnk1-mnk2} below.

\begin{remark}\label{rmk:mnk1-mnk2}
Algorithm~\ref{algmain} is successfully highlighting a closely packed portion of the chemical space of MNK2 binders that are non-binders to MNK1, while the top predicted MNK2 binders in the baseline model are relatively more spread out in the chemical fingerprint space.
\end{remark}

In order to observe the effects of reweighing on the top $100$ predicted binders to MNK2, we did four types of similarity comparisons in between two sets: (1) top $100$ baseline predictions, and (2) top $100$ treatment (reweighed) predictions:\footnote{Recall that, the baseline training dataset is not weighed using MNK1 data, so it effectively just predicts small molecules that are potent MNK2 binders without regard to their MNK1 binding property. The experiment training dataset reweighs the MNK2 binders and non-binders to highlight MNK1 non-binders. Therefore, if the reweighting is done optimally, with just the right amount of tilt, then we may be able to predict potent MNK2 binders that are preferrably MNK1 non-binders. }
\begin{enumerate}
\item For every baseline molecule we computed its mean similarity with the remaining $99$ baseline molecules,
\item For every treatment (reweighed) molecule we computed its mean similarity with the remaining $99$ treatment molecules.
\item For every baseline molecule we computed its mean similarity with the $100$ top treatment molecules,
\item For every treatment molecule we computed its mean similarity with the $100$ top baseline molecules.
\end{enumerate}

We expect the top predictions to qualitatively have the following properties:
\begin{itemize}
   \item The treatment (reweighing) highlights a small portion of the molecular space of MNK2 binders the ones that are not MNK1 binders, so we expect the mean similarity in (1) to be smaller than (2).
   \item If we assume there's a small region of molecules in the fingerprint space which are selective, i.e., bind to MNK2 but not MNK1, and from which the baseline has sampled about $50\%$ of its top $100$ predictions, while the treatment has sampled about $95\%$ of its top $100$ predictions (see Figure~\ref{fig:msel}); then we expect the following: (1) a plot of the computed similarities in item (3) above will have a bimodal distribution with about $50\%$ of the mass in each mode, and (2) a plot of the computed similarities in item (4) will have a bimodal distribution with about $95\%$ of the mass in one mode.
\end{itemize}

Indeed our experiments are consistent with the above. In our plots, we found that the top 100 predicted binders in the experimental model are indeed more similar to each other than the baseline model (mean Tanimoto similarity increases from $0.32$ to $0.53$), and hence are "packed" more closely together (see Figure~\ref{fig:cluster1}).
\begin{figure}[htb]
  \centering
  \includegraphics[width=0.45\linewidth]{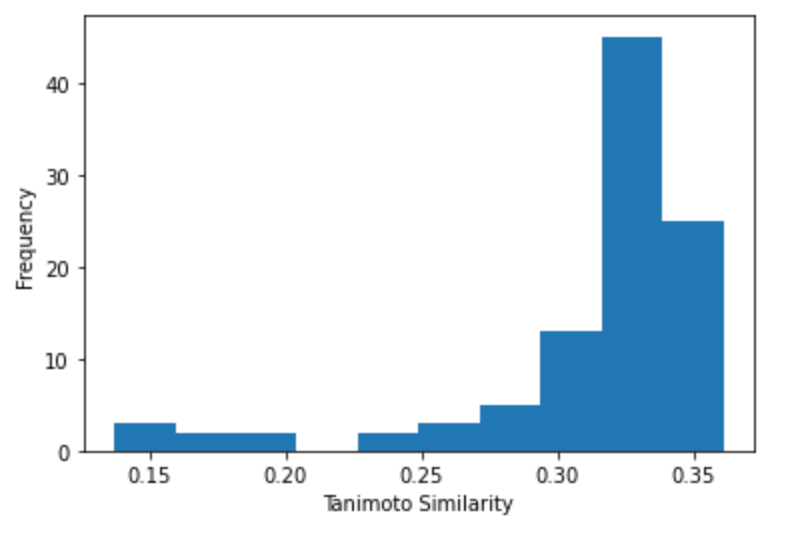}\quad
  \includegraphics[width=0.45\linewidth]{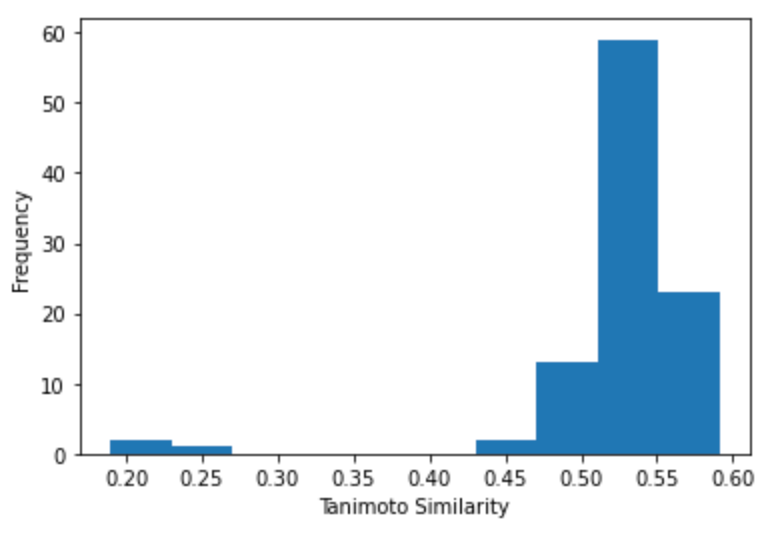}  
\caption{(L) Tanimoto similarities of top molecules in base model vs (R) Tanimoto similarities of top molecules in reweighed model}\label{fig:cluster1}
\end{figure}

We also observe that the mean similarity score for the cross comparison, in Figure~\ref{fig:cluster2}(L), is clearly bifurcated into two parts with means around $0.2$ and $0.5$ respectively. The right peak of 30 small molecules mostly comes from the 54 molecules that were MNK1 non-binders. This means that about half of the top 100 predicted MNK2 binders in baseline are much more similar to the top 100 predicted MNK2 binders in the reweighed model. Thus, Figures~\ref{fig:cluster1} and~\ref{fig:cluster2} are consistent with Remark~\ref{rmk:mnk1-mnk2}.
\makeatletter
\setlength{\@fptop}{0pt}
\makeatother
\begin{figure}[htb!]
  \centering
  \includegraphics[width=0.45\linewidth]{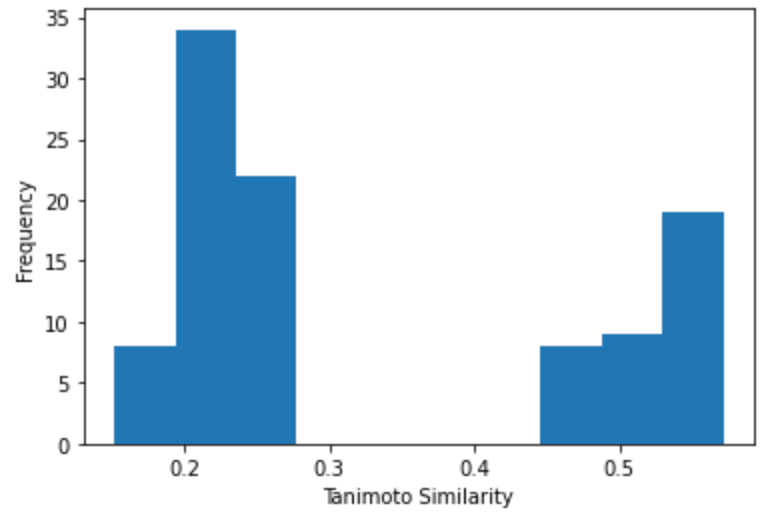}\quad 
  \includegraphics[width=0.45\linewidth]{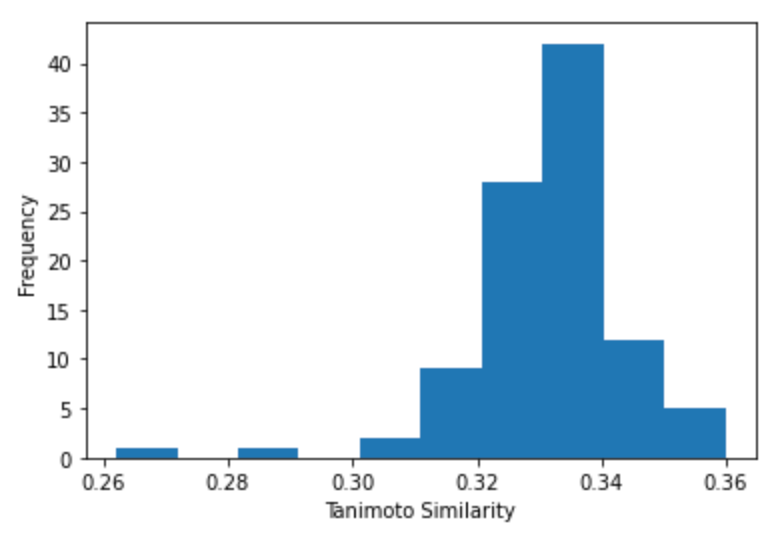}  
\caption{(L) Mean Tanimoto Similarity for each top molecule in base with reweighed model (note the bifurcation) vs (R) Mean Tanimoto Similarity for each top molecule in reweighed with base model.}\label{fig:cluster2}
\end{figure}
\section{Acknowledgements}
The author is grateful to Wen Torng, JW Feng, Jin Xu and Partick Riley for their help and advise with this paper.
\clearpage
\bibliographystyle{plainnat}
\bibliography{main}

\newpage
\appendix
\onecolumn
\section{Supplement}
In this appendix, we first formally state the assumptions and outline our theorems in Section~\ref{sec:proofs}, and then provide the full proofs in Section~\ref{sec:details}. Section~\ref{sec:expt} contains some more details about the drug discovery application.
\section{Formal setup and theorem statements}\label{sec:proofs}
\subsection{Choice of metric in Algorithm~\ref{algmain}: Bounding $1$-Wasserstein distance suffices}\label{wass:sec}

In this subsection, in Theorem~\ref{wassthm}, we show that the Wasserstein distance between two measures upper bounds the total variation distance between the invariant measures underlying the stochastic gradient descent (SGD) algorithms. 

Let $X\times Y$ denote the usual space of labeled examples i.e, in our case $X\subseteq\{0,1\}^{d(n)}$ is the set of feature values and $Y:=\{0,1\}$ is the set of labels. 
Our object of interest in this section is a neural network with smooth bounded activation functions. Let $y=f(w,x)$ denote the abstraction of our neural network, where $w$ denotes the real valued vector of weights. For a depth $p$ neural-net with piecewise polynomial activation functions of degree $q$, $f(w,x)$ is piecewise polynomial in $x$ with degree at most $pq$.

Let $\ell(\cdot)$ denote the loss function, which we will assume to be the sum of square loss, for the sake of concreteness. The ideas easily extend to any low degree loss function. The training loss can be written as:
\begin{equation}\label{losseq}
    \ell_w(\P_\S): = \E_{(x,y)\sim\P_\S}[(y-f(w,x))^2].
\end{equation}

We make a {\em covariate shift type assumption}. 
\begin{assumption}\label{covshft:assm}
Assume that $f$ and $y$ are bounded, say $y,f\in[0,1]$, and for all $w$:
\begin{equation}\label{covcond1}
\left|\E_{y\sim\P_\S(\cdot|x)}[(y-f(w,x))^2] - \E_{y\sim\P_\T(\cdot|x)}[(y-f(w,x))^2]\right|=O(1).
\end{equation}    
\end{assumption}
Essentially, it says the data-sets have similar average loss in the same neighborhood for a given set of weights. It is also worth noting here that, if the invariant distribution of the SGD is concentrated around the local minima, i.e., if all but $\epsilon$ fraction of the mass of the invariant distribution, for some small positive $\epsilon$, are present within a small neighborhood around the local minima; then for the proof of Theorem~\ref{wassthm1} to remain valid, Assumption~\ref{covshft1:assm} only really needs to hold for $w$'s that are close to some locally optimal $w^*$.

Recall that,\footnote{See for example~\citet{diff-pc} for an introductory discussion between SDEs and limiting SGD dynamics, and see~\citet{benarous} for a more advanced discussion in this regard.} a stochastic gradient descent algorithm with loss function $\ell$ can be abstracted as the It\'{o} diffusion in the limit of small step size: 
\begin{equation}\label{diffeq}
    \mathrm{d}w_\S(t) = \nabla_w \ell_w(\P_\S)\mathrm{dt} + \sigma_\S\mathrm{dB(t)},
\end{equation}
where $\nabla_w$ denotes gradient with respect to $w$, $B(t)$ denotes Standard Brownian Motion in $|w|$-dimensions and the matrix $\sigma_\S$ depends upon the variance of the loss function for the mini-batch, mini-batch size and the learning rate.

Such a diffusion process is associated with an invariant measure or equilibrium distribution. In our context, this is the distribution of weight parameters of the neural net, as training time becomes very large. The existence of an invariant measure requires that the infinitesimal generator\footnote{The infinitesimal generator $\mathcal{L}$ of the diffusion in Equation~\ref{diffeq} may be written as: $$\mathcal{L}f(w)=(\nabla_w f)^T(\nabla_w \ell_w(\P_\S)) + \mathrm{Trace}\left((\nabla_w^2 f(w))\frac{\sigma_\S\sigma_\S^T}{2}\right).$$
It captures its most important properties, and in particular its adjoint $\mathcal{L}^*$ characterizes the invariant distribution (when it exists), i.e., $\mathcal{L}^*\rho=0$, where $\rho$ is the invariant distribution.} associated with the diffusion be well behaved. In particular, Assumption~\ref{sgd:assm} about the generator ensures the existence of a unique limiting (invariant) measure, see~\cite{unif-elp}.

\begin{assumption}\label{sgd:assm}
We assume that the diffusion corresponds to an uniformly elliptic generator. Furthermore, we assume $\sigma_\S$ is isotropic i.e, it's a scalar multiple of the identity $\sigma\cdot\mathrm{Id}$ and that $\sigma_\S=\sigma_\T$, in Theorem~\ref{wassthm}.
\end{assumption}
We relax the isotropy assumption somewhat in Corollary~\ref{cor:wassthm}. Under the technical assumptions discussed above, we can prove the following result.

\begin{theorem}\label{wassthm}
Suppose we train two neural networks, under the assumptions~\ref{sgd:assm} and~\ref{covshft:assm}, on different input distributions, $\P_\T$ and $\P_\S$, using the stochastic gradient descent (SGD) algorithm. If $W_1(\P_\S,\P_\T)=\Omega(1)$, then the total variation distance between their invariant measures can be bounded by $O(W_{1}(\P_\T,\P_\S))$,\footnote{Note that the $O(\cdot)$ here characterizes the linear dependence on $W_{1}(\P_\T,\P_\S)$, and there is a large constant factor hidden in the notation here due to the maximum eigenvalue of the inverse of the adjoint operator of the corresponding SDE.} in the limit as training time goes to $\infty$ and SGD step-size goes to $0$.
\end{theorem}

Proof deferred to Section~\ref{sec:details}.

\begin{remark}\label{rmk:levy}
For dimension $d(n)$ large, the Levy-Prokhorov distance $L(\P_\mathcal{S},\P_\mathcal{T})$ between two distributions can be $\omega(1)$ times the Wasserstein distance $W_1(\P_\mathcal{S},\P_\mathcal{T})$, so a Levy-Prokhorov metric based algorithm and guarantee can be weaker than the above. 
\end{remark}
\subsection{The Greedy Algorithm}\label{algo:sec}
\subsubsection{Reduction to bipartite matching}

In this subsection, our main contribution is Theorem~\ref{main1:thm}. So far, we have established that $1$-Wasserstein metric is a sufficient topology to work with. This leads to the problem of computing the $1$-Wasserstein distance on two large datasets. That problem is equivalent to the minimum weight bipartite matching problem. In particular, we have the following lemma from~\cite{pankaja}.

\begin{theorem}\cite{pankaja}\label{thm:ps}
Given an instance of the optimal transport problem with supply and demands on two sets of points $(R,B)$, i.e, equivalently the $1$-Wasserstein distance computation problem in our case; we can construct an instance of the minimum weight bipartite matching problem such that solving the latter up to an approximation factor $\alpha$ will solve the former up to the same approximation factor $\alpha$.
\end{theorem}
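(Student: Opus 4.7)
The plan is to construct an explicit, cost-preserving reduction from the optimal transport instance $(R,B,s,d,\mathrm{dist})$ to a metric minimum weight bipartite matching instance, and then observe that any cost-preserving reduction trivially preserves multiplicative approximation ratios: if $\hat M$ is an $\alpha$-approximate matching and its image is a transport plan of the same cost, and likewise for the optima, then $\mathrm{cost}(\pi_{\hat M}) = w(\hat M) \le \alpha \cdot w(M^*) = \alpha \cdot \mathrm{cost}(\pi^*)$.

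First I would normalize the transport instance. In the setting of Algorithm~\ref{algmain}, both $\P_\S$ and $\P_\T$ are empirical distributions obtained from random samples of size $m$, so every supply $s_i$ and demand $d_j$ is a non-negative integer multiple of $1/m$. Multiplying through by $m$ is a global rescaling of cost and so leaves the approximation ratio invariant; after rescaling, all supplies and demands are non-negative integers with $\sum_i s_i = \sum_j d_j = m$.

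Second, I would build the bipartite graph $G = (R', B', E, w)$ by replacing each $r_i \in R$ by $s_i$ copies co-located at $r_i$ and each $b_j \in B$ by $d_j$ copies co-located at $b_j$, and by placing a complete bipartite graph between $R'$ and $B'$ with edge weights $w(r_i', b_j') := \mathrm{dist}(r_i, b_j)$. By construction $|R'| = |B'| = m$, so perfect matchings exist and the metric property of $w$ is inherited from $\mathrm{dist}$. Next I would set up the cost-preserving bijection: given a perfect matching $M$ of $G$, define the transport plan $\pi_M(r_i, b_j)$ to be the number of edges of $M$ between copies of $r_i$ and copies of $b_j$; the degree constraints of $M$ give exactly the supply and demand constraints, and $\mathrm{cost}(\pi_M) = w(M)$ termwise. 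Conversely, since the transportation polytope has totally unimodular constraint matrix, its vertices are integer-valued when supplies and demands are integers, so an optimal LP transport plan $\pi^*$ is integer; such an integer plan can be unfolded into a perfect matching on $G$ of the same weight. Therefore $\min_M w(M) = \min_\pi \mathrm{cost}(\pi)$, and the approximation preservation follows from the computation in the first paragraph applied to $\hat M$ and $M^*$.

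The main obstacle to watch for is not the equivalence itself, which is essentially classical and is precisely the content of~\cite{pankaja}, but the efficiency of the reduction: for general real-valued supplies and demands the copy-construction can blow up unboundedly. What rescues the argument in our context is that Algorithm~\ref{algmain} only ever invokes Theorem~\ref{thm:ps} on empirical distributions of the sample $R_\S, R_\T$ with uniform mass $1/m$, so each point is replaced by exactly one copy and the reduced graph has $m$ vertices on each side. The same comment justifies focusing on minimum weight bipartite matching in the subsequent analysis of the greedy algorithm.
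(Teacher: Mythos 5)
The paper does not actually prove Theorem~\ref{thm:ps}; it is cited verbatim from~\cite{pankaja} and used as a black box, so there is no in-paper argument to compare yours against. Your reconstruction is the standard, correct reduction: expand each supply and demand node into co-located unit copies, observe that perfect matchings on the expanded bipartite (pseudo)metric graph are in cost-preserving correspondence with integer transport plans, and use total unimodularity of the transportation polytope to conclude that the LP optimum is attained at an integer vertex, so the two optimal values coincide and any multiplicative $\alpha$-approximation for matching transfers verbatim to transport. This is the classical equivalence and is essentially the content the paper intends to invoke.

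One small factual correction to your closing paragraph: Algorithm~\ref{algmain} does \emph{not} invoke the reduction only on uniform-mass $1/m$ empirical distributions with one copy per point. The subroutine ScaledGreedyReweight (Algorithm~\ref{alg2}) first forms an $\alpha$-tilted and generally non-uniform supply $C\cdot\alpha\P_R(r)$ and demand $C - C(1-\alpha)\P_B(b)$, and then explicitly builds multi-sets $R',B'$ whose multiplicities equal these scaled integer quantities. In other words, the paper itself carries out exactly the copy construction of your first two paragraphs, with the scaling constant $C$ playing the role your normalization factor $m$ plays, and not the ``one copy per point'' special case. The blow-up concern you raise is handled by choosing $C$ (together with the sample size $m$) to keep the expanded instance tractable, not by restricting to uniform masses; so the general form of your reduction is the one the algorithm actually depends on.
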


The GreedyAlgorithm (below) carries out the reduction in Theorem~\ref{thm:ps} and calls GreedyMatch which matches two multisets embedded in a metric space using greedy algorithm on the edge lengths.

{\em Notation:} Scaling a discrete probability distribution $\P$ up by an integer factor of $C$ leads to a numerical rounding error of  $\frac{1}{C\min\{\P\}}$, where $\min\{\P\}$ denotes the minimum positive value of density $\P$. Assume that we pick a large enough constant $C$ below, so that we can ignore the rounding error for the purposes of Theorem~\ref{main1:thm}.
\begin{algorithm}[htb]
\caption{GreedyAlgorithm} \label{alg2}
\begin{algorithmic}[1]
\STATE {\bf Input:} Two probability distributions $\P_B,\P_R$ supported on $B,R\subset Q_d$, and a tilt factor $\alpha\in(0,1)$.
\STATE {\bf Output:} Probability distribution $\P_B'$ supported on $B$. Note
$\P_B'$ is close to $\alpha\P_R+(1-\alpha)\P_B$ in $W_1$, under assumptions of Theorem~\ref{main1:thm}.\\
$\triangleright$ {\bf Algorithm starts:}
\STATE For {$r\in R$}
\STATE \quad$\mathrm{Supply}(r)\gets C\cdot\alpha\P_R(r)$
\STATE For {$b\in B$}
\STATE \quad$\mathrm{Demand(b)}\gets C - C\cdot(1-\alpha)\P_B(r)$
\STATE \quad If {$\mathrm{Demand}(b) < 0$}
\STATE \qquad $\mathrm{Demand(b)}\gets 0$
\STATE Create multi-set $B',R'$ with multiplicities of each element being equal to their Demand and Supply respectively.
\STATE Use the usual BFS (breadth first search) based greedy algorithm: GreedyMatch (Algorithm~\ref{alg1}), on sets $R'$ and $B'$ to compute the met (matched) demands, i.e., the extent to which the demands of $B$ that are actually fulfilled by $R$.
\STATE Normalize the weights of met demands to obtain a probability distribution $\P_B'$ supported on $B$.
\STATE \textbf{return} $\P_B'$.
\end{algorithmic}
\end{algorithm}

\subsubsection{Greedy algorithm and metric entropy}

Recall that the data-points are set in the $d$-dimensional hypercube $Q_d$ with $\ell_1$ metric, where $d=O(\log n)$. The minimum weight bipartite matching problem is known to be harder than its non bipartite version. For example, the greedy algorithm is known to have a lower bound of $\Theta(n^{\log_2 3/2})$~\cite{tarjan} for the bipartite version with $n$ data-set $\T$ and $n$ data-set $\S$ vertices. 

\begin{definition}
Given a perfect matching $M$ over a subset of vertices $C$ in a graph $G$, an {\em alternating cycle} $\gamma$ is a cycle in $G$ such that each alternate edge in the cycle belongs to $M$. Note that any such $M$ corresponds to a set of vertex disjoint alternating cycles.
\end{definition}

In particular, Reingold and Tarjan~\cite{tarjan} essentially show the following theorem.
\begin{theorem}\cite{tarjan}\label{tjn}
Given a set of $n$ data-set $\T$ and data-set $n$ $\S$ points in a metric space, the greedy algorithm returns a matching with weight that is within a factor of $|\gamma|^{\log_2\frac{3}{2}}$ of the minimum weight matching, where $|\gamma|$ is the length of the longest alternating cycle $\gamma$ in the set (which can be $\Theta(n)$ for $Q_{\log n}$).
\end{theorem}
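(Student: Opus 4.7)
The plan is to reduce to a per-cycle analysis of the symmetric difference $G\oplus M^*$, where $G$ is the greedy matching and $M^*$ is a minimum weight matching. Since both are perfect matchings on the same balanced bipartite vertex set, $G\oplus M^*$ decomposes into vertex-disjoint alternating cycles, each of length at most $\gamma$. The overall claim then reduces to showing, on every alternating cycle $C$ with $k$ greedy and $k$ optimal edges, the local estimate $w(G\cap C)\leq k^{\log_2 3/2}\,w(M^*\cap C)$; summing over cycles and using $k\leq \gamma/2$ gives the desired approximation ratio up to an absorbable constant.

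To prove the per-cycle inequality I would argue by strong induction on $k$. I pick the heaviest greedy edge $g=\{u,v\}$ in $C$, and consider the two optimal edges $o_L,o_R$ of $C$ incident to $u$ and $v$. The greedy rule guarantees that when $g$ was inserted, both $u$ and $v$ were still unmatched, so every strictly lighter edge incident to one of them was either processed later (and hence has weight at least $w(g)$) or had its other endpoint already claimed by a lighter greedy edge. A short case analysis on whether $w(o_L)$ and $w(o_R)$ lie above or below $w(g)$, combined with the metric triangle inequality applied along an arc of $C$, lets me charge $w(g)$ to a constant-length chain of optimal and lighter greedy edges lying in $C$. Removing this chain partitions $C$ into at most two shorter alternating sub-cycles on which the inductive hypothesis applies.

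The resulting recurrence is where the exponent $\log_2(3/2)$ enters: the worst case of the charging scheme produces a recurrence whose solution grows like $k^{\log_2 3/2}$, mirroring the original Reingold--Tarjan analysis~\cite{tarjan}. Base cases for very small $k$ follow directly from the triangle inequality together with the greedy property.

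The step I expect to be the main obstacle is making the case analysis watertight: across the cases for the relative order of $w(o_L),w(o_R),w(g)$, I need a charging rule that (i) never reuses an optimal edge across nested inductive invocations, which requires maintaining a careful inductive invariant on which portion of $C$ is still ``available,'' and (ii) always produces a split $k_1+k_2\leq k-1$ whose worst case forces the exponent $\log_2(3/2)$ rather than something weaker like $\log_2 3$ (which would already follow from a naive $T(k)\leq 2T(k/2)+\text{stuff}$ recurrence). Verifying (ii) is where the precise exponent in the statement emerges, and I would follow the balancing argument of~\cite{tarjan} for this step.
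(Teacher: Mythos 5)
The paper does not actually prove Theorem~\ref{tjn}; it cites it from Reingold and Tarjan~\cite{tarjan} and uses it as a black box inside the proof of Lemma~\ref{sl1} (where it supplies the bound $\alpha\le|\gamma|^{\log_2 3/2}$). So there is no in-paper proof to compare against, and the only meaningful question is whether your sketch reconstructs the Reingold--Tarjan argument correctly.

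In outline, yes: decomposing $G\oplus M^*$ into vertex-disjoint alternating cycles, reducing to a per-cycle estimate $w(G\cap C)\le k^{\log_2(3/2)}\,w(M^*\cap C)$, and then running an induction driven by the heaviest greedy edge and the greedy availability property is exactly the Reingold--Tarjan mechanism, and the exponent does emerge from a balanced recurrence of the kind you describe. Two local slips are worth flagging, though. First, the sentence ``every strictly lighter edge incident to one of them was either processed later (and hence has weight at least $w(g)$) or had its other endpoint already claimed'' is internally contradictory: greedy scans edges in increasing weight, so a strictly lighter incident edge was necessarily processed \emph{earlier}, and since $u,v$ were still free at the time $g$ was chosen, the only possibility is that the other endpoint was already matched by an earlier, lighter greedy edge. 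The dichotomy you want is ``weight $\ge w(g)$'' versus ``lighter, hence its other endpoint was already claimed,'' not what you wrote. Second, deleting the heaviest greedy edge together with a chain of adjacent edges from an alternating \emph{cycle} leaves alternating \emph{paths}, not sub-cycles; to invoke the inductive hypothesis you either need to formulate it for paths or close each residual path with a shortcut edge whose length you control via the triangle inequality along the deleted arc, and that shortcut cost has to be carried through the recurrence. These are exactly the places where, as you anticipate, the precise Reingold--Tarjan bookkeeping is needed to land on $k^{\log_2(3/2)}$ rather than a weaker exponent; as written the sketch is directionally right but defers the load-bearing steps to the original paper.
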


In order to improve upon their guarantee, we will exploit the following assumption for our input instance.
\begin{assumption}
We assume that all input $\T$ and $\S$ points can be covered by $\eta$ balls of radius $\zeta$ lying within $Q_d$. We call such an input instance {\em $(\eta,\zeta)$-bounded}. The parameters $\eta$ and $\zeta$ will determine the approximation guarantee of our algorithm.
\end{assumption}

\begin{definition}\label{dfn:me}
Given a metric space, say $(Q, d)$ and $E\subset Q$,, the {\em metric entropy} $N^{\mathrm{ent}}_r(E)$ is the largest number of points $\{x_1,\dots,x_n\}$ one can find in $E$ that are $r$-separated, i.e., $d(x_i,x_j) \geq r$ for all $i \neq j$.
\end{definition}

\begin{definition}
Given a metric space, say $(Q, d)$ and $E\subset Q$, the (external) {\em covering number} $N^{\mathrm{cov}}_r(E)$  is the fewest number of points $\{x_1,\dots,x_n \in Q\}$ such that the $d$-balls $\{B(x_1,r),\dots,B(x_n,r)\}$ cover $E$.
\end{definition}

\begin{lemma}[Structural Lemma]\label{sl1}
For an alternating cycle $\gamma$ induced by the greedy matching, if the weight of edges in the alternating cycle coming from the greedy matching is at least $\alpha$ times the weight of edges in the alternating cycle coming from the minimum weight matching then the metric entropy of $\gamma$ is large i.e, more precisely, 
\begin{equation}
    \left( N^{\mathrm{ent}}_{\alpha/2}(\gamma) \cdot \frac{2d-\alpha}{\alpha}\right)^{\log_2 3/2}\ge\alpha.
\end{equation}
\end{lemma}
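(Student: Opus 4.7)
The plan is to combine Reingold--Tarjan's recursive analysis for greedy bipartite matching with a packing argument that exploits the $\ell_1$-hypercube structure of $Q_d$, so that the combinatorial length of the alternating cycle gets replaced by an expression involving $N^{\mathrm{ent}}_{\alpha/2}(\gamma)$.

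First, I would follow the recursion underlying Theorem~\ref{tjn}: let $e=(u,v)$ be the heaviest greedy edge on $\gamma$. By the greedy selection rule, when $e$ was picked both $u$ and $v$ were unmatched, so no shorter edge was incident to either of them among vertices unmatched at that moment; in particular, the two OPT-matching edges of $\gamma$ adjacent to $e$ are both controlled by $|e|$. Triangle inequality then lets us ``shortcut'' $e$ and produce a strictly smaller alternating cycle that still carries an unfavorable greedy/OPT ratio. The averaging argument of \cite{tarjan} shows that if the ratio on $\gamma$ is at least $\alpha$, one can find a sub-cycle of combinatorial length at most $(2/3)\ell$ whose ratio is at least $(2/3)\alpha$. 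Iterating this recursion yields $\alpha \leq \ell^{\log_2(3/2)}$, where $\ell$ is the number of vertices of $\gamma$.

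Next, I would convert $\ell$ into the metric entropy $N := N^{\mathrm{ent}}_{\alpha/2}(\gamma)$. By definition, there is a maximal $\alpha/2$-separated set $\{c_1,\dots,c_N\}\subset \gamma$, and maximality forces every vertex of $\gamma$ to lie within distance $\alpha/2$ of some $c_i$. Thus $\gamma$ splits into $N$ clusters, each of diameter at most $\alpha$ inside $Q_d$. The crucial sub-claim is that each cluster contains at most $(2d-\alpha)/\alpha$ vertices of $\gamma$: since in $Q_d$ with the $\ell_1$ metric each edge weight equals its Hamming length, a portion of the cycle confined to a ball of radius $\alpha/2$ can accumulate only boundedly many vertices before being forced (by the alternating/greedy structure together with the hypercube diameter $d$) to leave the ball; a short counting then yields the claimed per-cluster bound. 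Summing over the $N$ clusters gives $\ell \leq N \cdot (2d-\alpha)/\alpha$, and plugging this into $\alpha \leq \ell^{\log_2(3/2)}$ gives the stated inequality.

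The main obstacle is the per-cluster count of $(2d-\alpha)/\alpha$ in the second step. The Reingold--Tarjan recursion is already essentially in \cite{tarjan}; the new piece is tying the combinatorial length $\ell$ to the covering number $N$ through a clean, tight factor, which requires working simultaneously with the alternating/greedy structure of $\gamma$ and the $\ell_1$ geometry of $Q_d$. This is precisely where the low-metric-entropy hypothesis enters and improves the worst-case $n^{\log_2(3/2)}$ bound of \cite{tarjan} to $(N \cdot (2d-\alpha)/\alpha)^{\log_2(3/2)}$.
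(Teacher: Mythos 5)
Your plan matches the paper's proof in its high-level shape: both you and the paper invoke the Reingold--Tarjan bound $\alpha\le\ell^{\log_2(3/2)}$ (with $\ell$ the combinatorial length of $\gamma$) and then seek a bound $\ell\le N^{\mathrm{ent}}_{\alpha/2}(\gamma)\cdot(2d-\alpha)/\alpha$. The gap is in the second step. You decompose $\gamma$ into $N:=N^{\mathrm{ent}}_{\alpha/2}(\gamma)$ clusters and claim each cluster contains at most $(2d-\alpha)/\alpha$ cycle vertices, but you admit you do not have a proof of this, and indeed the claim is doubtful as stated: a ball of radius $\alpha/2$ in $Q_d$ can contain far more than $(2d-\alpha)/\alpha$ hypercube points, so the per-cluster count would have to come entirely from the alternating/greedy structure, and nothing obviously prevents the cycle from re-entering the same ball many times. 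The factor $(2d-\alpha)/\alpha$ has no natural reading as a per-ball vertex count.

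The paper avoids any per-cluster claim by averaging over greedy edges instead. Since the total greedy weight on $\gamma$ is at least $\alpha$ times the OPT weight, and edge lengths in $Q_d$ satisfy $d_{\min}=1$, $d_{\max}=d$, one gets
\begin{equation}
  d\cdot f + \frac{\alpha}{2}(1-f)\ \ge\ \alpha \quad\Longrightarrow\quad f\ \ge\ \frac{\alpha}{2d-\alpha},
\end{equation}
where $f$ is the fraction of greedy edges of weight at least $\alpha/2$. The paper then uses a separation property of greedy bipartite matchings (viewing greedy as parallel BFS): for greedy-matched pairs $(x,x')$ and $(y,y')$, $\min\{d(x',y),d(y',x)\}\ge\min\{d(x,x'),d(y,y')\}$. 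This shows the $|\gamma|f$ long greedy edges yield $\alpha/2$-separated points, so $|\gamma|f\le N^{\mathrm{ent}}_{\alpha/2}(\gamma)$, hence $\ell\le N\cdot(2d-\alpha)/\alpha$. The factor $(2d-\alpha)/\alpha$ therefore arises as the reciprocal of the fraction of long greedy edges, not as a per-ball bound. To repair your argument you would need to replace the per-cluster sub-claim with this averaging-plus-greedy-separation step; as written, the missing sub-claim is the crux of the lemma and is left unestablished.
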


Proof deferred to Section~\ref{sec:details}. Note that an approximation factor of $d$ is trivial on $Q_d$ or on any set with $d_{\min} = 1$ and $d_{\max} = d(n)$.
The following corollary shows that the above indeed helps to improve upon the trivial bound for appropriately bounded instances.

\begin{corollary}\label{cor:nontriv}
Lemma~\ref{sl1} implies that the greedy algorithm achieves an approximation factor of $o(d^{3/4})$ on a $(d^{3/4}, d^{3/4})$-bounded instance.
\end{corollary}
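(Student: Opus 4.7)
Let $\alpha^{*}$ denote the worst greedy-to-optimal weight ratio over all alternating cycles $\gamma$ in the symmetric difference of the greedy and the optimal matchings. Because the total weights of the two matchings decompose additively over such cycles, the overall approximation factor is upper bounded by $\alpha^{*}$, so it suffices to show that $\alpha^{*} = o(d^{3/4})$ under the $(d^{3/4},d^{3/4})$-bounded assumption.

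\emph{Step 1: turn the covering assumption into a metric-entropy bound on $\gamma$.} Since the instance is $(d^{3/4},d^{3/4})$-bounded, the vertex set of any alternating cycle $\gamma$ lies inside a union of $\eta=d^{3/4}$ balls of $\ell_1$-radius $\zeta=d^{3/4}$. Two points in the same ball are at distance at most $2\zeta$, so any $r$-separated subset of $\gamma$ with $r>2\zeta$ contains at most one representative per ball. Hence
\[
N^{\mathrm{ent}}_r(\gamma)\;\leq\;d^{3/4}\qquad\text{for every }r>2d^{3/4}.
\]

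\emph{Step 2: feed this into Lemma~\ref{sl1} and case-split.} In the main regime $\alpha^{*}>4d^{3/4}$ the scale $\alpha^{*}/2$ exceeds $2\zeta$, so Step~1 applies and Lemma~\ref{sl1} specializes to
\[
\alpha^{*}\;\leq\;\Bigl(d^{3/4}\cdot\frac{2d-\alpha^{*}}{\alpha^{*}}\Bigr)^{\log_2 3/2}\;\leq\;\Bigl(\frac{2d^{7/4}}{\alpha^{*}}\Bigr)^{\log_2 3/2}.
\]
Rearranging and using the identity $\frac{\log_2(3/2)}{1+\log_2(3/2)}=\frac{\log_2(3/2)}{\log_2 3}=\log_3(3/2)\approx 0.369$, one gets
\[
\alpha^{*}\;\leq\;\bigl(2d^{7/4}\bigr)^{\log_3(3/2)}\;=\;O\bigl(d^{(7/4)\log_3(3/2)}\bigr)\;=\;O(d^{0.646}),
\]
which is comfortably $o(d^{3/4})$. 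In the complementary regime $\alpha^{*}\leq 4d^{3/4}$ the bound is already $O(d^{3/4})$; to tighten it to $o(d^{3/4})$ I would refine Step~1 at scales $r\leq 2\zeta$ by subcovering each of the $\eta$ radius-$\zeta$ balls by smaller balls of radius $r/2$, which multiplies the entropy by at most a polynomial-in-$(2\zeta/r)$ factor that enters Lemma~\ref{sl1} under the sub-unit exponent $\log_2(3/2)$.

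\emph{Main obstacle.} The hard part is this last refinement: producing a metric-entropy bound for an $\ell_1$-ball of radius $d^{3/4}$ in $Q_d$ at scales $r\leq 2d^{3/4}$ that is sharp enough to survive being raised to the $\log_2(3/2)$ power in Lemma~\ref{sl1} without overshooting $d^{3/4}$. A crude volumetric estimate is too weak in high dimensions, but a subcovering argument relative to the fixed $\zeta$-cover, together with the damping effect of the exponent $\log_2(3/2)<1$, should suffice. Optimizing across the two regimes in Step~2 then yields the advertised $o(d^{3/4})$ approximation factor.
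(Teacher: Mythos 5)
Your approach tracks the paper's proof closely: both plug the bound $N^{\mathrm{ent}}_{\alpha/2}(\gamma)\leq N^{\mathrm{cov}}_{\alpha/2}(\gamma)$ into Lemma~\ref{sl1}, bound the covering number by $\eta=d^{3/4}$, and contrast the resulting exponent with $3/4$. Where you are more careful is in tracking the scale at which the $\eta$-bound is actually available: you observe (correctly) that $N^{\mathrm{ent}}_r(\gamma)\leq\eta$ only once $r>2\zeta$, so the sharp estimate applies only when $\alpha^*>4\zeta$, and you flag the complementary regime $\alpha^*\leq 4\zeta$ as giving merely $O(d^{3/4})$. That flagged gap is real, and the paper's own proof has the same issue in disguise: it substitutes $\alpha=d^{3/4}$, for which $\alpha/2=\zeta/2<\zeta$, yet still invokes $N^{\mathrm{cov}}_{\alpha/2}(\gamma)\leq\eta$, which is only licensed when $\alpha/2\geq\zeta$. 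What the contradiction argument rigorously shows is that any $\alpha$ in the reach of the covering bound must be smaller than $2\zeta=2d^{3/4}$, i.e.\ $\alpha=O(d^{3/4})$, which is exactly the $\max\{2\zeta,\cdots\}$ term already appearing in Theorem~\ref{main1:thm}; the little-$o$ is not obtained. Your Step~2 arithmetic (the exponent $\log_3(3/2)$ giving $O(d^{0.646})$) is correct and matches what the paper's Equation~(\ref{sqrtdcov}) yields, but it only bites above the $2\zeta$ threshold.

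The refinement you sketch for the small-$\alpha^*$ regime will not close the gap. Subcovering a single $\ell_1$-ball of radius $\zeta=d^{3/4}$ in $Q_d$ by balls of radius $r$ with $r$ comparable to $\zeta$ (say $r=\zeta/2$) costs roughly $\binom{d}{\zeta}/\binom{d}{r}\approx(d/\zeta)^{\Theta(\zeta)}$ balls, i.e.\ a factor that is super-polynomial in $d$ -- not polynomial in $2\zeta/r$ as hoped. Raising such a covering number to the power $\log_2(3/2)$ in Lemma~\ref{sl1} does not yield anything below $d^{3/4}$ at those scales. So the honest content of both your attempt and the paper's argument is an $O(d^{3/4})$ approximation factor (with the sharper $O(d^{0.646})$ only once $\alpha^*$ clears $2\zeta$); getting to $o(d^{3/4})$ would require an additional assumption controlling $N^{\mathrm{cov}}_r(\gamma)$ for $r<\zeta$, which the $(\eta,\zeta)$-boundedness hypothesis alone does not supply.
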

\begin{proof}
We know that $N^{\mathrm{cov}}_r(E) \ge N^{\mathrm{ent}}_r(E)$ (see for example~\cite{dz}). Therefore, Lemma~\ref{sl1} implies 
\begin{equation}\label{sqrtdcov}
    \alpha \le \left(N^{\mathrm{cov}}_{\alpha/2}(\gamma)\cdot\frac{2d-\alpha}{\alpha}\right)^{\log_2 3/2}.
\end{equation}
For $\alpha = d^{3/4}$, the right side of Equation~\ref{sqrtdcov} is $d^{log_2{3/2}}$, while the left side is $d^{3/4}$. Since $\log_2 3/2 < 3/4$ we have a contradiction. Therefore, $\alpha = o(d^{3/4}).$
\end{proof}

Of course, as the metric entropy decreases, the approximation factor improves, see for example the theorem below.

\begin{theorem}\label{main1:thm}
For $\eta = O(d^{\frac{1}{\xi\log_2 3/2}})$, ($\xi>1$), Lemma~\ref{sl1} implies that the greedy algorithm achieves an approximation factor of $\max\{2\zeta, O\left(d^{\frac{1+\xi\log_2(3/2)}{\xi(1+\log_2(3/2))}}\right)\}$ on a $(\eta,\zeta)$-bounded minimum weight matching instance. 
\end{theorem}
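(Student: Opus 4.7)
The plan is to bound the overall greedy-to-optimum ratio by the worst per-alternating-cycle ratio, then push each per-cycle ratio $\alpha$ through Lemma~\ref{sl1} after using the $(\eta,\zeta)$-boundedness to control the relevant metric entropy.

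First, I would observe that the symmetric difference of the greedy matching $M_G$ and the optimum matching $M^*$ decomposes into vertex-disjoint alternating cycles $\{\gamma_i\}$. Since the common edges contribute identically to both total weights and the cycle contributions are additive, we have $w(M_G)/w(M^*) \le \max_i \alpha_i$, where $\alpha_i := w_{M_G}(\gamma_i)/w_{M^*}(\gamma_i)$. It therefore suffices to bound $\alpha$ for one arbitrary alternating cycle $\gamma$.

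Fix such a $\gamma$ and split on $\alpha$. If $\alpha \le 2\zeta$, the first term in the stated maximum already suffices. Otherwise $\alpha > 2\zeta$, so $\alpha/2 > \zeta$, and each of the $\eta$ covering balls of radius $\zeta$ guaranteed by $(\eta,\zeta)$-boundedness can contain at most one $(\alpha/2)$-separated point of $\gamma$ (two points in the same ball of radius $\zeta$ lie at distance at most $2\zeta$, which is strictly below $\alpha/2$ up to a harmless constant factor), giving $N^{\mathrm{ent}}_{\alpha/2}(\gamma) \le \eta$. Plugging into Lemma~\ref{sl1} with $c := \log_2(3/2)$ yields $\alpha \le \bigl(\eta(2d-\alpha)/\alpha\bigr)^c \le (2d\eta/\alpha)^c$, i.e., $\alpha^{1+c} \le (2d)^c \eta^c$, whence $\alpha \le (2d)^{c/(1+c)}\, \eta^{c/(1+c)}$. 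Substituting the assumed bound $\eta = O(d^{1/(\xi c)})$ gives $\alpha = O(d^{c/(1+c) + 1/(\xi(1+c))}) = O(d^{(1+\xi c)/(\xi(1+c))})$, matching the exponent in the theorem after recalling $c = \log_2(3/2)$.

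The main obstacle I expect is simply bookkeeping around the case split: ensuring that when $\alpha$ is only slightly above $2\zeta$ the covering balls really do force $N^{\mathrm{ent}}_{\alpha/2}(\gamma) \le \eta$. One may need either to invoke a factor-$2$ relation between external covering and entropy (effectively replacing the threshold $2\zeta$ by $4\zeta$, which is harmless since both are absorbed into the $\max$) or a slight inflation of the covering radius. Once that is settled, the algebraic inversion of Lemma~\ref{sl1} and the substitution of $\eta$ are routine.
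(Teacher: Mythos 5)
Your proof is correct and follows essentially the same route as the paper: invoke Lemma~\ref{sl1} on a per-alternating-cycle basis, split on whether $\alpha$ exceeds the covering radius scale, bound $N^{\mathrm{ent}}_{\alpha/2}(\gamma)$ by $\eta$ in the large-$\alpha$ case, and solve the resulting inequality $\alpha^{1+c}\le(2d\eta)^c$. Two minor remarks: your direct use of $2d-\alpha\le 2d$ avoids the paper's extra sub-case split ($\alpha=\Omega(d)$ vs.\ $\alpha=o(d)$), which is redundant since the final exponent is already $<1$ for $\xi>1$; and your observation that the threshold should really be $4\zeta$ rather than $2\zeta$ (so that $\zeta$-balls cannot contain two $\alpha/2$-separated points) is a legitimate constant-factor issue that the paper glosses over, but as you say it is absorbed by the $\max$ and the $O(\cdot)$.
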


Together with Theorem~\ref{thm:ps}, Theorem~\ref{main1:thm} implies that the greedy algorithm obtains the approximation factor on a $(\eta,\zeta)$-bounded Wasserstein distance computation instance. Proof deferred to Section~\ref{sec:details}.

\subsection{Small random samples suffice}

In this subsection, our main contribution is Theorem~\ref{thm:randsamp}. we show that if the metric entropy is small, and so is the spread (see Definition~\ref{dfn:spread}) of the underlying distribution, then the empirical distribution of a much (polynomially) smaller sample is close to the actual distribution, in the 1-Wasserstein metric, with high probability.

\begin{definition}\label{dfn:spread}
Let $\mu$ be the uniform distribution supported on a subset of vertices $Q$ of $Q_{d(n)}$.
The {\em spread} of $\mu$, $S(\mu)$, is defined as:
\begin{equation}
    S(\mu):= \inf_{x_0\in Q}\left(1+\ln \left(\int_{Q}e^{d(x_0,x)^2}d\mu(x)\right)\right)^{1/2},
\end{equation}
where $d(\cdot,\cdot)$ denotes the $\ell_1$ distance on $Q_{d(n)}$.
\end{definition}
Note that the spread is positive and greater than $1$, for any distribution defined on the hypercube, since the minimum value of $d(\cdot,\cdot)$ is $1$. In general, $S(\mu)$ can be a function of $d(n)$.
\begin{theorem}\label{thm:randsamp}
For a $(\eta, \zeta)$ coverable point-set, with $m=\alpha(n)\left(\eta 2^\zeta\right)$ and $\alpha(n)\in(0,1)$, the 1-Wasserstein distance between the empirical distribution and the true distribution of data-sets with bounded metric entropy obeys the following Sanov type concentration bound:
\begin{equation}
      \exists\ \alpha(n)\to 0,\ \lim_{n\to\infty}\frac{1}{\eta 2^\zeta}\ln \mathbb{P}\left(W_1(\hat{\mu}_m, \mu) \ge \log\log n+o(1)S(\mu)\right) \le -\Omega(1).
\end{equation}
\end{theorem}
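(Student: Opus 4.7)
The plan is to combine three classical ingredients: (i) a coarsening step that exploits the $(\eta,\zeta)$-covering to reduce the problem to a finite alphabet of effective size $N := \eta\cdot|B_\zeta|=O(\eta 2^\zeta)$ (where $B_\zeta$ is a Hamming ball of radius $\zeta$ in $Q_{d(n)}$; any polynomial-in-$d(n)$ factor can be absorbed into the $\alpha(n)\to 0$ slack); (ii) the Bobkov-G\"otze $T_1$ transport--entropy inequality, which converts a Wasserstein-distance deviation into a relative-entropy deviation and is where the spread $S(\mu)$ enters; and (iii) the finite-alphabet upper bound in Sanov's theorem (cf.\ Lemma 2.1.9 of \cite{dz}).

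First I would let $c_1,\dots,c_\eta$ be centres of a $\zeta$-covering of $\mathrm{supp}(\mu)$ and $\pi$ the measurable map sending each point to its nearest centre. Since $W_1(\mu,\pi_*\mu)\le \zeta$ and similarly for $\hat\mu_m$, the triangle inequality gives
\begin{equation}
W_1(\hat\mu_m,\mu)\le 2\zeta + W_1(\pi_*\hat\mu_m,\pi_*\mu),
\end{equation}
reducing matters to an empirical Wasserstein deviation on the $\eta$-atom support $\{c_1,\dots,c_\eta\}$. Second, the $T_1$ inequality, which is essentially a restatement of the finiteness of $\int e^{d(x_0,\cdot)^2}d\mu$ in the definition of $S(\mu)$, gives for every probability measure $\nu'$ on the centres
\begin{equation}
W_1(\nu',\pi_*\mu)\le S(\mu)\sqrt{2\,H(\nu'\mid \pi_*\mu)},
\end{equation}
so $\{W_1(\pi_*\hat\mu_m,\pi_*\mu)\ge \rho\}\subseteq \{H(\pi_*\hat\mu_m\mid\pi_*\mu)\ge \rho^2/(2S(\mu)^2)\}$. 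Third, applying the finite-alphabet Sanov upper bound on a support of size at most $N$ yields
\begin{equation}
\mathbb{P}\bigl(W_1(\pi_*\hat\mu_m,\pi_*\mu)\ge \rho\bigr)\le (m+1)^{N}\exp\!\left(-\frac{m\rho^2}{2S(\mu)^2}\right).
\end{equation}
Setting $\rho=\log\log n+o(1)\,S(\mu)-2\zeta$, $m=\alpha(n)\,N$, taking logarithms and dividing by $N$:
\begin{equation}
\frac{1}{N}\ln\mathbb{P}\bigl(W_1(\hat\mu_m,\mu)\ge \log\log n + o(1)S(\mu)\bigr) \le \ln(m+1) - \frac{\alpha(n)\,\rho^2}{2S(\mu)^2}.
\end{equation}
Choosing $\alpha(n)\to 0$ slowly enough (slower than $S(\mu)^2\ln(m+1)/(\log\log n)^2$) makes the RHS bounded by $-\Omega(1)$, which is the claim.

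The main obstacle is the bookkeeping needed to force the Sanov pre-factor $(m+1)^N$ to be subleading against the exponential concentration term; a naive application would dissipate the $\Omega(N)$ rate stated in the theorem. The tuning of $\alpha(n)$ is delicate because $N=\eta 2^\zeta$ itself may grow with $n$, and because the slack between $\zeta$ and $\log\log n$ in $\rho$ must simultaneously be kept positive -- this is the place where the covering assumption does genuine work, as small $\zeta$ keeps the target radius $\rho$ close to $\log\log n$. A secondary technicality is verifying that the pushforward $\pi_*\mu$ still satisfies a $T_1$ inequality with the same constant $S(\mu)$ (pushforwards under $1$-Lipschitz maps do not worsen the $T_1$ constant, so this reduces to observing that $\pi$ is $1$-Lipschitz on $\mathrm{supp}(\mu)$, up to the additive $2\zeta$ already absorbed by the coarsening step).
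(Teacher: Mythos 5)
Your high-level template (covering reduction, transport--entropy inequality, Sanov-type upper bound, parameter tuning) is the same as the paper's, but there is a genuine gap in the coarsening step that the paper avoids and your proposal does not. You project onto the $\eta$ centres of the given $\zeta$-covering and pay a triangle-inequality penalty of $2\zeta$, so the target deviation becomes $\rho=\log\log n + o(1)S(\mu) - 2\zeta$. In the regime where the theorem is actually applied (Theorem~\ref{thm:main-eff}) one has $\zeta=\Theta(\log^{c}n)$ with $c$ a positive constant, which dwarfs $\log\log n$, making $\rho<0$ and the event you bound vacuous. You flag this tension yourself (``the slack between $\zeta$ and $\log\log n$ must be kept positive'') but do not resolve it. The paper's proof sidesteps this by refining to a $\delta$-net with $\delta=\log\log n \ll \zeta$ and using the covering-based upper bound from the proof of multidimensional Cram\'er's theorem (exercise~6.2.19 in~\cite{dz}), which only incurs a blow-up of the bad set by $\delta+\varepsilon$, not $2\zeta$; the $(\eta,\zeta)$ assumption is used solely to control the \emph{size} of the $\delta$-net via a volume ratio (Equation~\ref{eqn:ent-est}), not as the effective alphabet itself. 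So your coarsening needs to be carried out at scale $\delta$, not $\zeta$, with the covering number of the refined net estimated from the $(\eta,\zeta)$-boundedness.

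Two secondary issues. First, your claim that the pushforward $\pi_*\mu$ inherits the $T_1$ constant $S(\mu)$ because ``$\pi$ is $1$-Lipschitz'' is false: nearest-centre projection is not $1$-Lipschitz (two nearby points straddling the bisector of two far-apart centres map to far-apart images). One must instead bound $S(\pi_*\mu)$ directly, e.g. from $d(x_0,\pi(x))\le d(x_0,x)+\zeta$, which inflates the constant by a $\zeta$-dependent factor and feeds back into the same problem as above. Second, the paper uses the linear transport inequality $H(\nu,\mu)\,S(\mu)\ge W_1(\nu,\mu)$ from~\cite{bv} (Bolley--Villani) rather than the square-root $T_1$ inequality you invoke, and uses the prefactor $(4/\varepsilon)^{m(\mathcal S_\mu,\delta)}$ from quantising the simplex over the $\delta$-net rather than the method-of-types prefactor $(m+1)^{N}$; these are legitimate variant choices, but the resulting parameter balance is different, so a correct write-up would need to redo the final tuning rather than quoting the stated exponent.
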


The proof is in Section~\ref{sec:details}. It closely follows the covering based proof of multidimensional Cramer's theorem in its metric entropy version (exercise 6.2.19 in~\cite{dz}) with two main differences: (1) we need the topology induced by the Wasserstein metric instead of the Levy metric, and (2) our space has large dimension, i.e., say $\log n$, which depends upon $n$.
The second point requires us to be more careful with the covering argument, and so we only prove a relatively weaker result, with the help of the transportation inequality from~\cite{bv}.

\subsection{Greedy with random sampling}

Theorems~\ref{thm:randsamp},~\ref{main1:thm}, and~\ref{thm:ps} imply the following efficiency guarantee about the greedy minimum weight bipartite matching algorithm (GreedyMatch) on a random sample, and therefore Algorithm~\ref{algmain} as well.
\begin{theorem}\label{thm:main-eff}
  Suppose we are given two data-sets with $\mathcal{S}$ and $\mathcal{T}$ that are weighted according to distributions $\P_\mathcal{S}$ and $\P_\mathcal{T}$. If, 
\begin{enumerate}
\item $\S\cup \T$ admits a small covering: an $(\eta, \zeta)$ covering with $\eta,\zeta = O(\log^cn)$ and $\eta = O(\log^cn)$ for some constant $c \le \frac{1}{\xi(1+\log_2(3/2)})$, for any $\xi>1$; and
\item $\P_\S$ and $\P_\T$ are sufficiently far apart: $W_1(\P_\mathcal{S},\P_\mathcal{T})\ge\log\log n+o(1)\max\{S(\P_\mathcal{S}), S(\P_\mathcal{T})\}$
\end{enumerate}
then the greedy algorithm achieves an approximation ratio of $\max\left(2\zeta, O\left(d^{\frac{1+\xi\log_2(3/2)}{\xi(1+\log_2(3/2))}}\right)\right)$ with probability $1-o(1)$, when computed on a small random sample of $r(n)$ fraction of data-points and $r(n)\to 0$.
\end{theorem}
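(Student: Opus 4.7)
The plan is to combine the three earlier ingredients of Section~\ref{algo:sec} via the triangle inequality for $W_1$. First I would invoke Theorem~\ref{thm:ps} to reduce approximating $W_1(\P_\S,\P_\T)$ to approximating a minimum weight bipartite matching with the same factor; this is exactly the reduction that ScaledGreedyReweight carries out internally. So it suffices to analyze the value returned by GreedyMatch on the bipartite instance that is built from the two random samples $R_\S,R_\T$ of size $m$.

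Next I would apply Theorem~\ref{thm:randsamp} to each of $\P_\S$ and $\P_\T$ with common sample size $m = \alpha(n)\,\eta 2^\zeta$. Under the hypothesis $\eta,\zeta = O(\log^c n)$ with $c \le 1/(1+\log_2(3/2)) < 1$ we have $\eta 2^\zeta = 2^{O(\log^c n)} = n^{o(1)}$, so I can pick $\alpha(n) \to 0$ such that $r(n):=m/n\to 0$, as required by the theorem statement. A union bound over the two applications of Theorem~\ref{thm:randsamp} yields, with probability $1-o(1)$,
\begin{equation}
W_1(\hat\P_\S,\P_\S),\ W_1(\hat\P_\T,\P_\T) \le \log\log n + o(1)\max\{S(\P_\S), S(\P_\T)\}.
\end{equation}
Combining with hypothesis~2 and the triangle inequality then gives $W_1(\hat\P_\S,\hat\P_\T) = (1\pm o(1))\,W_1(\P_\S,\P_\T)$, so any multiplicative $\rho$-approximation of $W_1(\hat\P_\S,\hat\P_\T)$ automatically turns into a $(1+o(1))\rho$-approximation of the true $W_1(\P_\S,\P_\T)$.

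For the third step I would invoke Theorem~\ref{main1:thm} on the sampled instance. Since the supports of $\hat\P_\S,\hat\P_\T$ are subsets of $\S\cup\T$, any $(\eta,\zeta)$-covering of $\S\cup\T$ covers the samples as well, and the integer scaling by $C$ inside ScaledGreedyReweight only inflates point multiplicities and not the underlying metric geometry. Thus the bipartite instance handed to GreedyMatch is still $(\eta,\zeta)$-bounded. The assumed ceiling $c\le 1/(1+\log_2(3/2))$ lets me choose some $\xi>1$ with $\eta = O(d^{1/(\xi\log_2 3/2)})$ (recall $d=\Theta(\log n)$), so Theorem~\ref{main1:thm} gives the approximation factor $\max\{2\zeta, O(d^{(1+\xi\log_2(3/2))/(\xi(1+\log_2(3/2)))})\}$ for GreedyMatch on this instance. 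Chaining the three bounds finishes the argument.

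The hard part, and what really pins down the exponent bound on $c$, is making the three estimates mutually compatible with a single sample size $m$: $m$ must be at least of order $\eta 2^\zeta$ so that Theorem~\ref{thm:randsamp} gives sub-constant additive $W_1$ concentration, yet small enough that $m/n\to 0$. Both are possible simultaneously exactly when $\eta 2^\zeta = n^{o(1)}$, which is what $c\le 1/(1+\log_2(3/2))$ buys. Only after this is arranged does the separation hypothesis~2 allow upgrading the additive Wasserstein sampling error into a $(1\pm o(1))$ multiplicative factor, so that the Theorem~\ref{main1:thm} approximation ratio can be chained through without degradation; everything else is essentially bookkeeping.
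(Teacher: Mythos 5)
The paper provides no explicit proof of Theorem~\ref{thm:main-eff} beyond the sentence preceding it, which states it follows by combining Theorems~\ref{thm:ps},~\ref{main1:thm}, and~\ref{thm:randsamp}; the supplement contains no further argument. Your proposal is precisely that combination, carried out in the right order (reduce via Theorem~\ref{thm:ps}, control sampling error via Theorem~\ref{thm:randsamp} and a union bound, inherit the $(\eta,\zeta)$-boundedness on the sampled support, and invoke Theorem~\ref{main1:thm}), and it correctly identifies that $c\le\frac{1}{1+\log_2(3/2)}$ is what makes $\eta 2^\zeta = n^{o(1)}$ (so $r(n)\to 0$) compatible with some $\xi>1$ in Theorem~\ref{main1:thm}. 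So in substance you have reconstructed what the paper intends.

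Two small imprecisions worth flagging. First, the step ``$W_1(\hat\P_\S,\hat\P_\T) = (1\pm o(1))\,W_1(\P_\S,\P_\T)$'' overstates what hypothesis~2 gives: the separation condition only guarantees the additive sampling error is at most $W_1(\P_\S,\P_\T)$, not a lower-order term, so the triangle inequality yields only $W_1(\hat\P_\S,\hat\P_\T)\le 3\,W_1(\P_\S,\P_\T)$ (and $\ge 0$), i.e., an $O(1)$ rather than $(1+o(1))$ multiplicative relation. This is harmless since the claimed approximation ratio already carries a $O(\cdot)$, but it should be stated as a constant factor. Second, the exponent you obtain, $\frac{1+\xi\log_2(3/2)}{\xi(1+\log_2(3/2))}$, comes from Theorem~\ref{main1:thm} and differs from the exponent $\frac{1+\xi\log_2(3/2)}{1+\log_2(3/2)}$ in the statement of Theorem~\ref{thm:main-eff} (which is missing a $\xi$ in the denominator); since $\xi>1$, your bound is strictly stronger, and the discrepancy looks like a typo in the theorem statement rather than a gap in your argument.
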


More succinctly, Theorem~\ref{thm:main-eff} states that the greedy algorithm on a small random sample can be used to approximate $W_1(\P_\mathcal{S},\P_\mathcal{T})$ on our data-sets $\mathcal{S}$ and $\mathcal{T}$, as long as the data-sets admit a small size covering using balls of small radius, and the two training weight distributions $\P_\S$ and $\P_\T$ are sufficiently different, which is the interesting case.
\section{Proofs and details}\label{sec:details}
\subsection{Proof of Theorem~\ref{wassthm}}

\begin{proof}
Let $\mathcal{L}_\T, \mathcal{L}_\S$ be the infinitesimal generators, and let $\rho_\T(w),\rho_\S(w)$ be the invariant measures, corresponding to the limiting process for the SGD for training on $\T$ and $\S$ respectively. Then, from our ergodicity assumption about the SGD, and the definition of invariant measures, we have:
\begin{eqnarray}
\mathcal{L}_\T^*\rho_\T(w) & = & 0,\\
\mathcal{L}_\S^*\rho_\S(w) & = & 0.
\end{eqnarray}

We know that $\mathcal{L}_\S$ is a perturbation of $\mathcal{L}_\T$. So, let
\begin{equation}
    \mathcal{L}_\S^* \rho_\T(w) = \varepsilon(w).
\end{equation}
Therefore,
\begin{equation}\label{eqn:inv1}
    \mathcal{L}_\S^* \rho_\S(w) - \mathcal{L}_\S^* \rho_\T(w) = \varepsilon(w),
\end{equation}
and
\begin{equation}\label{eqn:inv2}
    \mathcal{L}_\T^*\rho_T(w) - \mathcal{L}_\S^* \rho_\T(w) = \varepsilon(w).
\end{equation}
Putting Equations~\ref{eqn:inv1} and~\ref{eqn:inv2} together, we have:
\begin{eqnarray}\label{eqn:inv3}
    \mathcal{L}_\S^*(\rho_\S(w) - \rho_\T(w)) &=&  (\mathcal{L}_\T^* - \mathcal{L}_\S^*) \rho_\T(w)\nonumber\\
    (\rho_\S(w) - \rho_\T(w)) &=&  (\mathcal{L}_\S^*)^{-1}(\mathcal{L}_\T^* - \mathcal{L}_\S^*) \rho_\T(w),
\end{eqnarray}
where we have used the uniform ellipticity assumption in the last step to ensure the inverse exists. Taking $1$-norm on both sides and using the sub-additivity of operator norms, we have:
\begin{equation}
     \|\rho_\S(w) - \rho_\T(w)\|_1 \le  \|(\mathcal{L}_\S^*)^{-1}\|_1\|(\mathcal{L}_\T^* - \mathcal{L}_\S^*)\|_1.
\end{equation}
We will upper-bound $\|(\mathcal{L}_\T^* - \mathcal{L}_\S^*)\|_1$ in terms of  $W_1(\P_\T,\P_\S)$, but do note that the $O(\cdot)$ notation only characterizes the linear dependence on $W_1$, and there is a large constant factor hidden in the notation due to the maximum eigenvalue of the inverse of the adjoint operator above.

The essential idea is to simply write down the adjoints of the elliptic operators, group like terms together and use Kantorovich-Rubenstein duality to upper-bound each of the resulting terms in terms of $W_1(\P_\T,\P_\S)$. Recall that,
\begin{eqnarray}
\mathcal{L}_\T\psi&\equiv& \nabla_w\ell_w(\P_\T)\frac{\partial\psi}{\partial w_i} + D\frac{\partial \psi}{\partial w_i\partial w_j}, \\
\mathcal{L}_\T^*\psi&\equiv& \frac{\partial \nabla_w\ell_w(\P_\T)}{\partial w_i}\psi + \nabla_w\ell_w(\P_\T)\frac{\partial \psi}{\partial w_i} -  D\frac{\partial \psi}{\partial w_i\partial w_j},
\end{eqnarray}
where $D=\sigma_\T\sigma_\T^T$, and we have used the Einstein summation notation on the partial derivatives for the sake of brevity in expressing the last two equations. Similarly, we can write out $\mathcal{L}_\S$ and $\mathcal{L}_\S^*$.

Note that:
\begin{equation}\label{eqn:iso-adj}
    (\mathcal{L}_\T^*-\mathcal{L}_\S^*)\psi\equiv  \left(\frac{\partial \nabla_w\ell_w(\P_\T)}{\partial w_i} - \frac{\partial \nabla_w\ell_w(\P_\T)}{\partial w_i}\right)\psi + (\nabla_w\ell_w(\P_\S)-\nabla_w\ell_w(\P_\S))\frac{\partial \psi}{\partial w_i},
\end{equation}
where we have used Assumption~\ref{sgd:assm} to cancel out the second order derivative terms.

One can choose $\psi$ as any Lipschitz function of unit $\ell_1$ norm, so that 
if we upper bound the coefficients of the two partial terms on the RHS of Equation~\ref{eqn:iso-adj} for every co-ordinate $i$ by $W_1(\P_\T,\P_\S)$, then we will have bounded $\|\mathcal{L}_\T^*-\mathcal{L}_\S^*\|_1$ by $W_1(\P_\T,\P_\S)$.
The first term can be upper-bounded as:
\begin{eqnarray}\label{eqn:covcond-use}
     \nabla_w\ell_w(\P_\T)-\nabla_w\ell_w(\P_\S)
     &=&\nabla_w\E_{x\sim\P_\S}\E_{y\sim\P_\S(\cdot|x)}[(y-f(w,x))^2] - \nabla_w\E_{x\sim\P_\T}\E_{y\sim\P_\T(\cdot|x)}[(y-f(w,x))^2]\nonumber\\
     &\simeq&\nabla_w\E_{x\sim\P_\S}\E_{y\sim\P_\S(\cdot|x)}[(y-f(w,x))^2] - \nabla_w\E_{x\sim\P_\T}\E_{y\sim\P_\S(\cdot|x)}[(y-f(w,x))^2]\nonumber\\
     &\le& O(W_{1}(\P_\T,\P_\S)),
\end{eqnarray}
where we have used the Kantorovich-Rubenstein duality together with the assumption that $\E_{y\sim\P(\cdot|x)}[(y-f(w,x))^2]$ is O(1)-Lipschitz in deriving the last inequality.

Similarly, one can show the same upper-bound for $\left(\frac{\partial \nabla_w\ell_w(\P_\T)}{\partial w_i} - \frac{\partial \nabla_w\ell_w(\P_\T)}{\partial w_i}\right)$.
Therefore, $\|\mathcal{L}_\T^*-\mathcal{L}_\S^*\|_1\le O(1) W_1(\P_\T,\P_\S)$. 
\end{proof}
\begin{corollary}\label{cor:wassthm}
The anisotropic diffusivity case: The upper-bound holds when the diffusivity is anisotropic as well, with the caveat that $W_1(\P_\mathcal{S},\P_\mathcal{T})$ be replaced by $W_1(\P_\mathcal{S},\P_\mathcal{T})^2$.
\end{corollary}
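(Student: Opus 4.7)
The plan is to mirror the proof of Theorem~\ref{wassthm} and isolate the new contribution from mismatched diffusion matrices $D_\S \ne D_\T$. I would begin from the same identity as in the isotropic proof,
\begin{equation*}
\rho_\S(w) - \rho_\T(w) = (\mathcal{L}_\S^*)^{-1}(\mathcal{L}_\T^* - \mathcal{L}_\S^*)\rho_\T(w),
\end{equation*}
which is still valid because uniform ellipticity of $\mathcal{L}_\S$ only requires the diffusion matrix to be positive definite, not isotropic. Sub-multiplicativity of the operator norm then gives $\|\rho_\S-\rho_\T\|_1 \le \|(\mathcal{L}_\S^*)^{-1}\|_1\,\|\mathcal{L}_\T^* - \mathcal{L}_\S^*\|_1$, with the first factor still $O(1)$ because the hypercube state-space is bounded and ellipticity holds.

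The key step is to compute $(\mathcal{L}_\T^* - \mathcal{L}_\S^*)\psi$ as in Equation~\ref{eqn:iso-adj}, but without canceling the second-order part, since now $D_\T \ne D_\S$. This yields a decomposition
\begin{equation*}
(\mathcal{L}_\T^* - \mathcal{L}_\S^*)\psi = \text{(first-order drift terms)} + \epsilon^2(D_\T - D_\S):\nabla^2\psi.
\end{equation*}
The first-order piece is handled exactly as in the isotropic proof: using Assumption~\ref{covshft:assm} together with Kantorovich--Rubinstein duality applied to the Lipschitz map $x\mapsto (y-f(w,x))^2$, one obtains an $O(W_1(\P_\T,\P_\S))$ bound, just as in Equation~\ref{eqn:covcond-use}.

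For the new second-order piece I would expand each entry $(D_\P)_{ij} \propto \mathrm{Cov}_{x\sim\P}(\partial_{w_i}\ell,\partial_{w_j}\ell)$ and write
\begin{equation*}
(D_\T-D_\S)_{ij} = \bigl(\E_{\P_\T}[\partial_i\ell\,\partial_j\ell] - \E_{\P_\S}[\partial_i\ell\,\partial_j\ell]\bigr) - \bigl(\E_{\P_\T}[\partial_i\ell]\E_{\P_\T}[\partial_j\ell] - \E_{\P_\S}[\partial_i\ell]\E_{\P_\S}[\partial_j\ell]\bigr).
\end{equation*}
Each term is bilinear in shifts of expectations; treating the two factors as independent Lipschitz observables on the same underlying change of measure and applying Kantorovich--Rubinstein to each factor yields two Wasserstein contributions whose product is $O(W_1(\P_\T,\P_\S)^2)$. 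Combining with the previous paragraph and using Assumption~\ref{covshft:assm} that $W_1 = \Omega(1)$ (so the quadratic contribution dominates the linear one), I obtain the claimed $O(W_1(\P_\T,\P_\S)^2)$ total-variation bound.

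The main obstacle is making rigorous the quadratic bound on $\|D_\T - D_\S\|$: a careless entrywise K--R application produces only $O(W_1)$, and one has to exploit the bilinear structure of the covariance to gain the second Wasserstein factor. An alternative route, which I would fall back on if the direct bilinear argument gets tangled, is to pass through a Talagrand-type $W_2^2$ transportation--entropy inequality for each of the two invariant measures, since the diffusion-matrix mismatch naturally enters at the level of quadratic functionals, and then to re-express the bound in terms of $W_1$ using $W_1 \le W_2$ (accepting some looseness in the constants). Either route must also account for the dimension $d(n)=\Theta(\log n)$ entering via the trace of the diffusion matrix; this will have to be absorbed into the $O(\cdot)$ constant, and checking that it does so without inflating the Wasserstein dependence is the technically delicate part.
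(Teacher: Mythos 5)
Your proposal follows essentially the same route as the paper: reuse the identity $\rho_\S-\rho_\T=(\mathcal{L}_\S^*)^{-1}(\mathcal{L}_\T^*-\mathcal{L}_\S^*)\rho_\T$, isolate the new second-order term, write the diffusivity $D(\P)$ as a covariance of gradients, bound the covariance difference via Kantorovich--Rubinstein duality to get $O(W_1)+O(W_1^2)$, and invoke Assumption~\ref{covshft:assm} ($W_1=\Omega(1)$) so that the quadratic term dominates. The delicacy you flag (that a naive entrywise K--R gives only $O(W_1)$ and one must exploit the bilinear/product-measure structure to extract the $W_1^2$ factor) is exactly the step the paper itself compresses into the observation that $\E_{\P_\S\times\P_\S}[f]-\E_{\P_\T\times\P_\T}[f]$ is bounded by $W_1^2$; your fallback via a Talagrand-type inequality is a reasonable hedge but is not the route the paper takes.
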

\begin{proof}
 The proof of Theorem~\ref{wassthm} uses isotropic diffusivity in one place only -- when computing the difference $\mathcal{L}^*_\mathcal{T} - \mathcal{L}^*_\mathcal{S}$. Note that, the diffusivity may be written as (see for example~\cite{diff-pc}):
\begin{equation}
D(\P):=\E\left[\nabla\ell_w(\P)\cdot\nabla\ell_w(\P)^T\right] - \E\left[\nabla\ell_w(\P)\right]\cdot\E\left[\nabla\ell_w(\P)^T\right].
\end{equation}
Then, $D(\P_\mathcal{S})-D(\P_\mathcal{T})$ can be upper-bounded in terms of $O(W_1(\P_\mathcal{S},\P_\mathcal{T}))+O(W_1(\P_\mathcal{S},\P_\mathcal{T})^2)$ under the assumption that we have $O(1)$-Lipschitz gradients. The argument is similar to that used for the drift term in the isotropic case, albeit with one new observation, the term 
\[\E_{\P_\mathcal{S}\times\P_\mathcal{S}}\left[f(w)\right] - \E_{\P_\mathcal{T}\times\P_\mathcal{T}}\left[f(w)\right]\]
can be upper-bounded by $W_1(\P_\mathcal{S},\P_\mathcal{T})^2$ using Kantorovich-Rubenstein duality and the definition of Wasserstein distance.
\end{proof}

\subsection{Further details about the greedy bipartite matching algorithm}
For the sake of completeness, and its relevance to the proof of Lemma~\ref{sl1} (presented next), we outline below a way to implement the greedy bipartite matching algorithm as Algorithm~\ref{alg1}.
\begin{algorithm}[htb]
\caption{GreedyMatch (BFS based greedy bipartite matching)}\label{alg1}
\begin{algorithmic}[1]
\STATE{\bf GreedyMatch}($R,B$)
\STATE {\bf Input:} Two multi-sets of $n$ points $R,B$ in $Q_d$.
\STATE {\bf Output:} A matching from $R$ to $B$.
\STATE For {$r \in R$}\qquad $\triangleright$ All for loop statements run in parallel
\STATE \quad$b\gets\mathrm{BreadthFirstSearch}(r,B)$\quad $\triangleright$ Find the vertex in $B$ closest to $r$ and match it to $r$ (break ties arbitrarily)
\STATE \quad$M\gets M\cup \{r\to b\}$
\STATE \textbf{return} $M$
\STATE
\STATE {\bf BreadthFirstSearch}($r,B$)
\STATE For {$i=1,...,d$}
\STATE \quad For {$v\in Q_d$, $\|v-r\|_1=i$}
\STATE \quad If {$v\in B$}
\STATE \qquad$B\gets B\setminus{v}$
\STATE \textbf{return} $v$\qquad $\triangleright$ $r$ matches to $v$
\end{algorithmic}
\end{algorithm}

\subsection{Proof of Lemma~\ref{sl1}}

\begin{proof}
One way to write the greedy matching algorithm is to imagine it as a set of parallel breadth first searches (BFS) (see for e.g.~\ref{alg1}). For any two vertices $x,y\in\mathrm{R}$ in an alternating cycle $\gamma$, suppose their neighbors from the greedy matching algorithm are $x'$ and $y'$ respectively. Think of the step in the BFS before either $x$ or $y$ was matched, so at that time-point the BFS from $x'$ and $y'$ hadn't reached either $x$ or $y$. Therefore, we have the following relationship between their mutual distances:
\begin{equation}\label{greedyprop}
    \min\{d(x',y), d(y',x)\} \ge \min\{d(x,x'), d(y,y') \},
\end{equation}
where $d$ denotes the distance metric, which in our case is the underlying cost in $W_1$ i.e, the $\ell_1$ distance.
The situation is illustrated in Figure~\ref{fig1}.
\begin{figure}[h!]
    \centering
    \includegraphics[width=10.5cm, height=5.5cm]{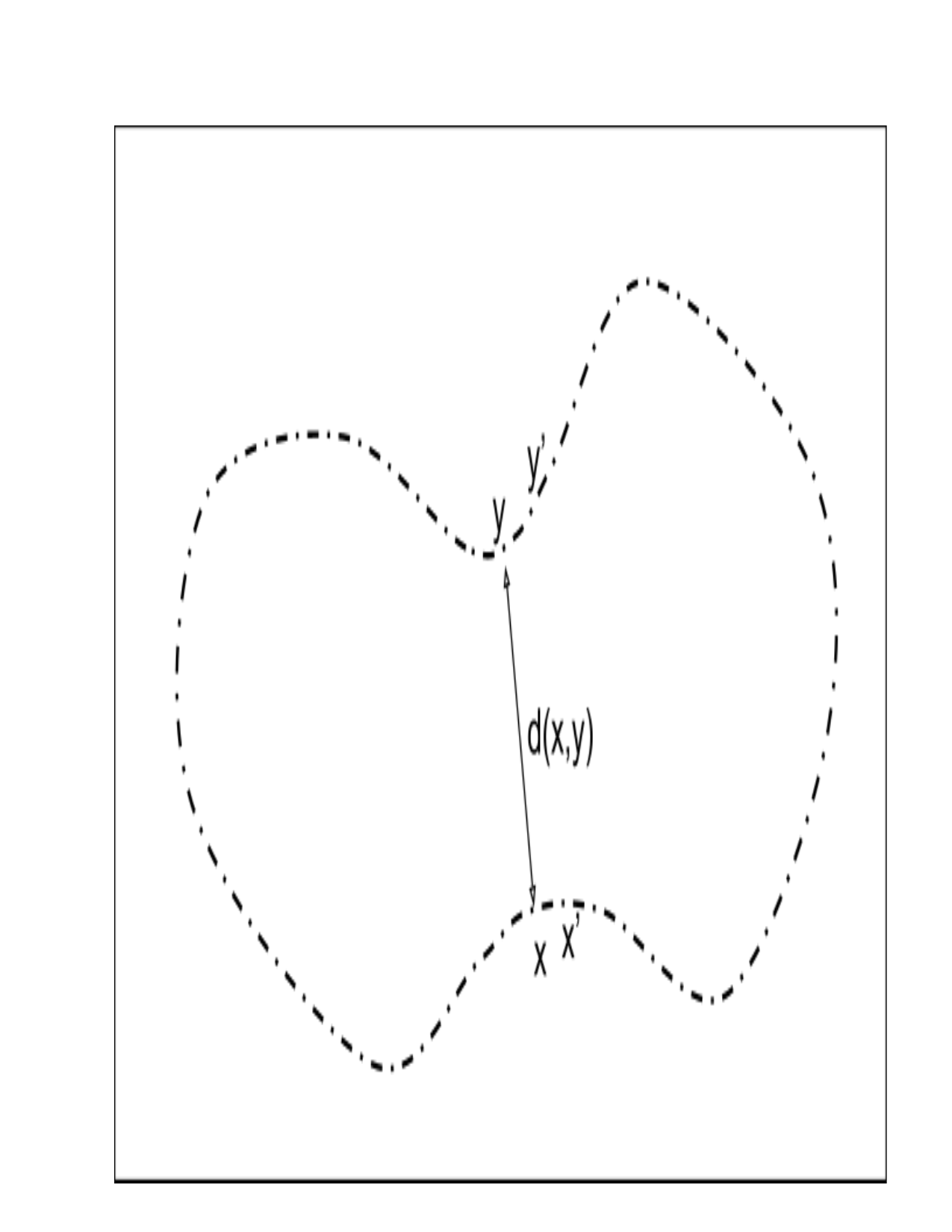}
    \caption{Alternate edges in an alternating cycle $\gamma$ belong to greedy and optimal matching.}
    \label{fig1}
\end{figure}

Now suppose that the weight of the greedy matching edges in $\gamma$ is $\alpha$ times the weight of the minimum weight matching. Then we show below that a significant fraction of the greedy edges in the cycle $\gamma$ must be at a distance at least $\alpha/2$ from their neighbors. 

Let $G$ be the set of greedy edges in $\gamma$ and $M$ be the set of optimal matching edges. Then we have,
\begin{equation}
    \sum_{xy\in G} d(x,y) \ge \alpha\sum_{xy\in M} d(x,y).
\end{equation}
Let $f$ be the fraction of edges in $G$ with weight at least $\alpha/2$. Let's call that set $G_{\alpha/2}$. Recall that, in the setting of $Q_d$, $d_{\min} = 1$ and $d_{\max} = d$. Therefore, we have
\begin{equation}
    d\cdot f + \frac{\alpha}{2}\cdot(1-f)\ge \alpha.
\end{equation}
Therefore, $f\ge\frac{\alpha}{2d-\alpha}$.

By the definition of metric entropy and Equation~\ref{greedyprop}, we know that
\begin{equation}\label{mentprop}
    |\gamma|f\le N^{\mathrm{ent}}_{\alpha/2}(G_{\alpha/2})\le N^{\mathrm{ent}}_{\alpha/2}(\gamma).
\end{equation} 
By Theorem~\ref{tjn} we know that $\alpha\le|\gamma|^{\log_2 3/2}$. Putting that together with Equation~\ref{mentprop} gives:
\begin{equation}
    \alpha \le \left(N^{\mathrm{ent}}_{\alpha/2}(\gamma)\cdot\frac{2d-\alpha}{\alpha}\right)^{\log_2 3/2}.
\end{equation}
\end{proof}

\subsection{Proof of Theorem~\ref{main1:thm}}
\begin{proof}
We have two cases:
\begin{enumerate}
    \item $\alpha \le 2\zeta$: In this case, there's nothing to prove.
    \item $\alpha \ge 2\zeta$: In this case, since $N^{\mathrm{cov}}_{\alpha/2}(\gamma)\ge N^{\mathrm{cov}}_{\zeta}(\gamma) = \eta$, we have 
    \begin{equation}\label{age2e}
    \alpha \le \left(\eta\cdot\frac{2d-\alpha}{\alpha}\right)^{\log_2 3/2}.
    \end{equation}
    Therefore, we have two sub-cases:
    \begin{enumerate}
    \item $\alpha = \Omega(d)$: In this case, we obtain from Equation~\ref{age2e} that $\alpha = O(\eta^{\log_2 3/2})$, which is $o(d)$ for $\eta = O(d^{\frac{1}{\xi\log_2 3/2}})$ --  a contradiction for $\xi>1$. Hence $\alpha = o(d)$.
    \item $\alpha = o(d)$: In this case we obtain:
    \begin{eqnarray}
    \alpha^{1+\log_2{3/2}} &\le& \left(\eta\cdot 2d\right)^{\log_2 3/2}\nonumber\\
    \alpha &\le& O\left(d^{\frac{1+\xi\log_2(3/2)}{\xi(1+\log_2(3/2))}}\right),\label{age2e2}
    \end{eqnarray}
    where we have used $\eta = O(d^{\frac{1}{\xi\log_2 3/2}})$ in the last inequality.
    \end{enumerate}
\end{enumerate}
\end{proof}

\subsection{Proof of Theorem~\ref{thm:randsamp}}

\begin{proof}
Recall that, we have an $(\eta,\zeta) = (\log^{c_1}n,\log^{c_2}n)$ instance, for some small constants $c_1$ and $c_2$. So the covering number of the support set for $\mu$, denoted $\mathcal{S}_\mu$, with balls of radius $\delta$ ($\delta\in[1,\zeta)$), denoted $m(\mathcal{S}_\mu,\delta)$, is upper bounded by $\eta\cdot\frac{\mathrm{Vol}(\zeta,Q_{d(n)})}{\mathrm{Vol}(\delta, Q_{d(n)})}$. Replacing the asymptotic value for the volume, we get
\begin{equation}\label{eqn:ent-est}
    \eta\cdot\frac{\mathrm{Vol}(\zeta,Q_{d(n)})}{\mathrm{Vol}(\delta, Q_{d(n)})}\le \eta\cdot 2^{-d(n)(H(\zeta/d(n))-H(\delta/d(n)))}-o(1),
\end{equation}
where $H(x):=x\log_2 x + (1-x)\log_2(1-x)$ is the entropy function and is negative for $x\in(0,1)$.

Since $\mathcal{S}_\mu$ of $\mu$ is finite, the set of set of probability measures $M_1$ that are supported on $\mathcal{S}_\mu$ is compact in the 1-Wasserstein metric topology. Therefore, there exists a finite covering of $M_1$, i.e., using elementary measures that are constant on the atoms of a finite covering of $\mathcal{S}_\mu$, we can approximate any given probability measure in $M_1$ up to an additive constant $\varepsilon+\delta$, in the 1-Wasserstein metric. The value of the constant for each ball in the covering ranging in $[0,1]$ in steps of $\varepsilon$. We will fix the values of $\varepsilon\in(0,1)$ and $\delta\in[1,\zeta)$ later in the proof.

Therefore, as in exercise 6.2.19 in~\cite{dz}, we can bound the covering number of $M_1$, i.e., $m(M_1,\delta,\varepsilon)$ by 
\begin{equation}
    m\left(M_1,\delta,\varepsilon\right)\le {m(\mathcal{S}_\mu,\delta)(1 + \frac{1}{\varepsilon}) \choose m(\mathcal{S}_\mu, \delta)}\le \left(\frac{4}{\varepsilon}\right)^{m(\mathcal{S}_\mu, \delta)}.
\end{equation}

Therefore, we have by the standard covering argument for the proof of multidimensional version of Cramer's large deviation bound (equivalently Sanov's theorem for finite spaces, see exercise 6.2.19 in~\cite{dz}):
\begin{equation}\label{eqn:ld-rs}
    \exists m_0\ \forall m>m_0,\ \mathbb{P}(\hat{\mu}_m\in A)\le m\left(M_1,\delta,\varepsilon\right)\cdot e^{-m\cdot\inf_{\nu\in A^{\varepsilon+\delta}}H(\nu,\mu)},
\end{equation}
where $H(\nu,\mu)$ is the relative entropy (KL divergence), and $A^\delta$ is the $\delta$ blow-up of $A\subset M_1$ with respect to the 1-Wasserstein metric.

Note that $\inf_{\nu\in A^{\varepsilon+\delta}}H(\nu,\mu)$ can be lower bounded in terms of the 1-Wasserstein distance using the following transportation inequality from~\cite{bv}.
\begin{theorem}\cite{bv}
  For distribution $\mu,\nu$ supported on any polish space, we have:
  \begin{equation}
      H(\mu,\nu) S(\mu)\ge W_1(\mu,\nu).
  \end{equation}
\end{theorem}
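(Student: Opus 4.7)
The plan is to stitch together the three results referenced in the theorem statement (Theorem~\ref{thm:ps}, Theorem~\ref{thm:randsamp}, and Theorem~\ref{main1:thm}) in a careful way so that the sampling error is swallowed as a lower order additive term while the greedy approximation supplies the multiplicative ratio. First I would use Theorem~\ref{thm:ps} to replace the question about computing $W_1(\P_\S,\P_\T)$ with the question of computing a minimum weight bipartite matching between two multisets encoding $\P_\S$ and $\P_\T$ in the hypercube $Q_{d(n)}$ under the $\ell_1$ metric. Since the reduction preserves approximation factors, it suffices to bound the ratio achieved by the greedy matching on an appropriate random sample.

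Next, I would instantiate Theorem~\ref{thm:randsamp} on a random sample of size $m=r(n)\cdot n$ drawn according to $\P_\S$ and $\P_\T$ respectively. Under the covering hypothesis~(1), we have $\eta,\zeta=O(\log^c n)$, so $\eta\,2^\zeta = 2^{O(\log^c n)}$, and for any $r(n)\to 0$ sufficiently slowly we can still make $m=\alpha(n)\bigl(\eta 2^\zeta\bigr)$ with $\alpha(n)\to 0$. Theorem~\ref{thm:randsamp} then gives, for each of the two distributions,
\begin{equation}
W_1(\hat{\P}_{\S,m},\P_\S),\ W_1(\hat{\P}_{\T,m},\P_\T)\;\le\; \log\log n + o(1)\cdot\max\{S(\P_\S),S(\P_\T)\}
\end{equation}
with probability $1-o(1)$, by a union bound over the two datasets. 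By the triangle inequality in $W_1$, assumption~(2) then implies that $W_1(\hat{\P}_{\S,m},\hat{\P}_{\T,m})$ differs from $W_1(\P_\S,\P_\T)$ by at most an additive $2\bigl(\log\log n + o(1)\max\{S(\P_\S),S(\P_\T)\}\bigr)$, which is $o(1)\cdot W_1(\P_\S,\P_\T)$ by the lower bound assumed in~(2). Hence approximating the empirical distance between sampled distributions gives an approximation of the true $W_1(\P_\S,\P_\T)$ to the same multiplicative factor, up to a $1+o(1)$ slack.

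Now I would invoke Theorem~\ref{main1:thm} on the sampled matching instance. The key observation is that any subset of an $(\eta,\zeta)$-bounded set inherits the same $(\eta,\zeta)$-covering (the covering balls cover the subset too), so the random samples of $\S$ and $\T$ remain $(\eta,\zeta)$-bounded with $\eta = O(\log^c n)$. The assumption $c\le \frac{1}{1+\log_2(3/2)}$ in the theorem statement is exactly what is required for the hypothesis $\eta = O(d^{1/(\xi\log_2 3/2)})$ of Theorem~\ref{main1:thm} to hold with $d=\Theta(\log n)$ and some $\xi>1$: one just rearranges $\log^c n \le (\log n)^{1/(\xi\log_2 3/2)}$ to read off the admissible range of $\xi$. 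Thus greedy on the sample achieves an approximation ratio of $\max\bigl\{2\zeta,\,O(d^{(1+\xi\log_2(3/2))/(\xi(1+\log_2(3/2)))})\bigr\}$, which, combined with the earlier $(1+o(1))$ slack from sampling, yields the stated ratio.

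The main obstacle I anticipate is the bookkeeping around how the additive sampling error from Theorem~\ref{thm:randsamp} interacts with a multiplicative approximation guarantee: one needs assumption~(2) precisely because without a lower bound on $W_1(\P_\S,\P_\T)$ the additive $\log\log n$ error could dominate and wipe out the multiplicative guarantee. Care must also be taken that the scaling constant $C$ in ScaledGreedyReweight is chosen large enough so that the rounding error in the multisets is $o\bigl(W_1(\P_\S,\P_\T)\bigr)$, and that the union bound over the two sampling events and the greedy high-probability event still leaves overall success probability $1-o(1)$; both are routine but must be tracked to claim $r(n)\to 0$ in the final statement.
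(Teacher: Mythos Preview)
Your proposal does not address the stated theorem at all. The statement you were asked to prove is the Bolley--Villani transportation inequality $H(\mu,\nu)\,S(\mu)\ge W_1(\mu,\nu)$, which the paper merely \emph{cites} from~\cite{bv} and uses as a black-box tool inside the proof of Theorem~\ref{thm:randsamp}; the paper gives no proof of it, and an actual proof would proceed by functional-analytic or optimal-transport arguments relating relative entropy to transport cost (e.g.\ via a weighted Pinsker-type inequality and the dual formulation of $W_1$), none of which appear in your write-up.

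What you have written is instead a proof sketch for Theorem~\ref{thm:main-eff}, the efficiency guarantee that combines Theorems~\ref{thm:ps},~\ref{main1:thm}, and~\ref{thm:randsamp}. That sketch is reasonable for \emph{that} theorem and broadly mirrors how the paper says those pieces fit together, but it is entirely disconnected from the transportation inequality you were asked to justify. In particular, nothing in your argument produces or even uses the quantity $S(\mu)$ as defined in Definition~\ref{dfn:spread}, nor does it relate $H(\mu,\nu)$ to $W_1(\mu,\nu)$; you have simply targeted the wrong statement.
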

Essentially, 
\begin{equation}
    \inf_{\nu\in A^{\varepsilon+\delta}}H(\nu,\mu)\ge \inf_{\nu\in A^{\varepsilon+\delta}}\frac{W(\nu,\mu)}{S(\mu)}\ge \frac{W(\nu,\mu)- \delta - \varepsilon}{S(\mu)}.
\end{equation}
For $\varepsilon \ll 1$, we have $\delta+\varepsilon\simeq\delta$. Therefore, the exponent on the RHS of Equation~\ref{eqn:ld-rs} can be lower bounded as
\begin{equation}
    \inf_{\nu\in A^{\varepsilon+\delta}}H(\nu,\mu)\ge \frac{W(\nu,\mu)-\delta}{S(\mu)} .
\end{equation}
Furthermore, for $m=\alpha(n)|Q|=\alpha(n)\left(\eta 2^\zeta\right)$, the RHS of Equation~\ref{eqn:ld-rs} can be upper bounded as 
\begin{eqnarray}\label{eqn:was-san}
\mathbb{P}(\hat{\mu}_m\in A)&\le&\left(\frac{4}{\varepsilon}\right)^{m(\mathcal{S}_\mu, \delta)}\cdot e^{-\left(\eta 2^{-d(n)H(\zeta/d(n))}\right)\alpha(n)\cdot\left(\frac{W(\nu,\mu)- \delta}{S(\mu)}\right)}\nonumber\\
&\le&e^{\eta\cdot 2^{-d(n)(H(\zeta/d(n))-H(\delta/d(n)))}\ln\left(\frac{4}{\varepsilon}\right)}\cdot e^{-\left(\eta 2^{-d(n)H(\zeta/d(n))}\right)\alpha(n)\cdot\left(\frac{W(\nu,\mu)- \delta}{S(\mu)}\right)},
\end{eqnarray}
where we have used Equation~\ref{eqn:ent-est} in the last inequality. If we choose $\delta$ and $\varepsilon$ such that
\begin{eqnarray}
    \left(\frac{W(\nu,\mu)- \delta}{S(\mu)}\right)&\ge& \frac{2^{d(n)H(\delta/d(n))}\ln\left(\frac{4}{\varepsilon}\right)}{\alpha(n)}\nonumber\\
    W(\nu,\mu) &\ge& \delta + \frac{2^{d(n)H(\delta/d(n))}\ln\left(\frac{4}{\varepsilon}\right)}{\alpha(n)}S(\mu),
\end{eqnarray}
equivalently for a small enough $\alpha(n)$, say $\alpha(n) = \log\log n$, choose $\delta(n) = \log\log n$ and $\varepsilon > 0$ then $W(\nu,\mu)\ge\log\log n +o(1)S(\mu)$ and the exponent in Equation~\ref{eqn:was-san} is negative. Thus $\mathbb{P}(\hat{\mu}_n\in A)\to 0$ as $n\to\infty$.
\end{proof}

\section{Experimental results}\label{sec:expt}
MNK1 and MNK2 are two structurally similar kinases responsible for cell signalling. Their inhibition has been explored for certain cancer therapies (see for example, the survey~\cite{mnk-survey}). In this section, we describe an application of Algorithm~\ref{algmain} that selects for MNK2 binders which are MNK1 non-binders. 

We use (roughly) the same set-up as in~\cite{pfr} -- a graph convolutional neural network with cross entropy loss.\footnote{Our proofs are for mean square error loss but the similarity between the two loss functions means that the general ideas should continue to hold.} At a high level, we start with a train data-set $\mathcal{S}$, labeled as hits (binders) and non-hits (non-binders) for MNK2, and another labeled set $\mathcal{T}$ which consists of just MNK1 non-hits (non-binders)\footnote{
These labeled training data-sets are proprietary and can't be disclosed, as was the case in~\cite{pfr}. Hence we train on proprietary data-sets, but perform inference on publicly available data-sets (Enamine and MCULE catalogs). to validate the model prospectively, we experimentally tested our top predicted selective molecules from the publicly available Enamine catalog, which results in 2 molecules (of 43 tested) verified to be selective by single point of concentration assays at 10$\mu$M. See Subsection~\ref{wet-lab:sn} for more details.
}. Let $\P_\mathcal{S}$ and $\P_\mathcal{T}$ denote the distribution of weights on the examples in the two data-sets. We assume $\P_\mathcal{S}$ and $\P_\mathcal{T}$ are uniform in our experiment section, but their set of support is different. Next, we used $\alpha = 0.95$ in Algorithm~\ref{algmain} to reweigh the MNK2 binders in $\mathcal{S}$ for training using $\mathcal{T}$ (the MNK1 non-binders). In theory, this should bring the limiting distribution of network weights, that results from training on reweighed $\P_\mathcal{S}$, "closer" to that which can be obtained from training on $\P_\mathcal{T}$. That's our experimental model. For the baseline model, we simply skip the reweighing step in Algorithm~\ref{algmain} and train the network using SGD.

\subsection{Further details about the experimental setup}\label{sbs:expt-det}
Below we provide some more details of our experiments beyond the high level description in the main paper. We use a scaled down but otherwise same setup as in~\cite{pfr}. The experiment consists of two sets of training data:
\begin{enumerate}
    \item Baseline data: The baseline training set consists of labeled disynthon examples divided into five classes:
    (1) Non-hit, (2)  Matrix binder, (3) Promiscous hit, (4) Non-competitive hit (5) Competitive hit. Of these five, the class of interest is competitive hit and has about 1,200 molecules. The overall  training set size is approximately $250$K labeled disynthons (similar to, but a scaled down version of~\cite{pfr}) for MNK1 and MNK2  combined.\footnote{Because of our reliance on proprietary disynthon libraries, the training data-sets can't be open sourced, as was the case in~\cite{pfr}.}
    \item Experiment data: This is exactly the same as baseline data, except for one caveat: molecules that are competitive binders to MNK2 and close to non-hits to MNK1 in the molecular fingerprint space, have their weights increased in the loss function, using the transportation algorithm (Algorithm~\ref{algmain}) described in the previous section. In other words, small molecules close to non-hits to MNK1 are weighted relatively higher amongst the competitive binders to MNK2.
   \item We use a holdout of our labeled data-set, that consists of about $7$K small molecules that belong to the labeled set of: MNK2 binders, and MNK1 binders or MNK1 non-binders; to compute the selectivity of our algorithm. About $40\%$ of this set was labeled MNK2 binders and MNK1 non-binders. For both models we obtain a set of top 100 molecules in the holdout data that are predicted to be the strongest binders to MNK2. 
\end{enumerate}

\subsection{Assay results}\label{wet-lab:sn}

Due to proprietary reasons we can not make our code and data public. However, to ensure a degree of verifiability of our results, about fifty of the top predicted selective small molecules from the enamine catalog were synthesized and experimentally verified for binding properties to MNK1 and MNK2. This subsection discusses those results. 

We selected 50 of the top predicted small molecules that should bind to MNK2 but not to MNK1 from the enamine catalog of 1.9 Billion small molecules, for synthesis and experimental testing. 
The selection process essentially filtered out any small molecule that was above a ECFP6 Tanimoto similarity of $0.3$ with respect to an already selected (higher score) small molecule in the top predicted selective molecules. At $10\mu M$ concentration, it was found that 2 out of 43\footnote{Seven molecules could not be synthesized and tested.}  (roughly $5\%$) of predicted small molecules reduced the enzyme activity of MNK2 below 50\% but not that of MNK1. 

\begin{remark}
Note that most small molecules that bind to MNK2 will also bind to MNK1 because of structural similarities. Admittedly, our training set is an order of magnitude smaller than that in~\cite{pfr} and the number of molecules experimentally verified is also an order of magnitude smaller than~\cite{pfr}. However, if the result scales up, then the $5\%$ experimentally verified success rate, in predicting selectivity against two targets simultaneously, may be further compared with the $30\%$ success rate for predicting binders against single targets using similar ML approaches~\cite{pfr}.
\end{remark}

The names of the two molecules in Figure~\ref{fig:mcul-wetlab} are:
\begin{enumerate}
    \item For C(NC=1N=CC=C(OC=2C=CC=CC2)N1)C=3C=CC(=CC3)C=4C=CC=NC4, the IUPAC name is:\\ 4-phenoxy-N-{[4-(pyridin-3-yl)phenyl]methyl}pyrimidin-2-amine,
    \item For C(C1CCN(CC1)C=2C=NC=C(OC3CCCCC3)N2)C=4C=CC=NC4, the IUPAC name is:\\ 2-(cyclohexyloxy)-6-{4-[(pyridin-3-yl)methyl]piperidin-1-yl}pyrazine.
\end{enumerate}

On average the model predicts molecules which are better binders to MNK2 than MNK1, as can be seen by the plot of the enzyme activity of the forty three  small molecules below.

\makeatletter
\setlength{\@fptop}{0pt}
\makeatother
\begin{figure}[ht!]
\centering
    \includegraphics[scale=0.12]{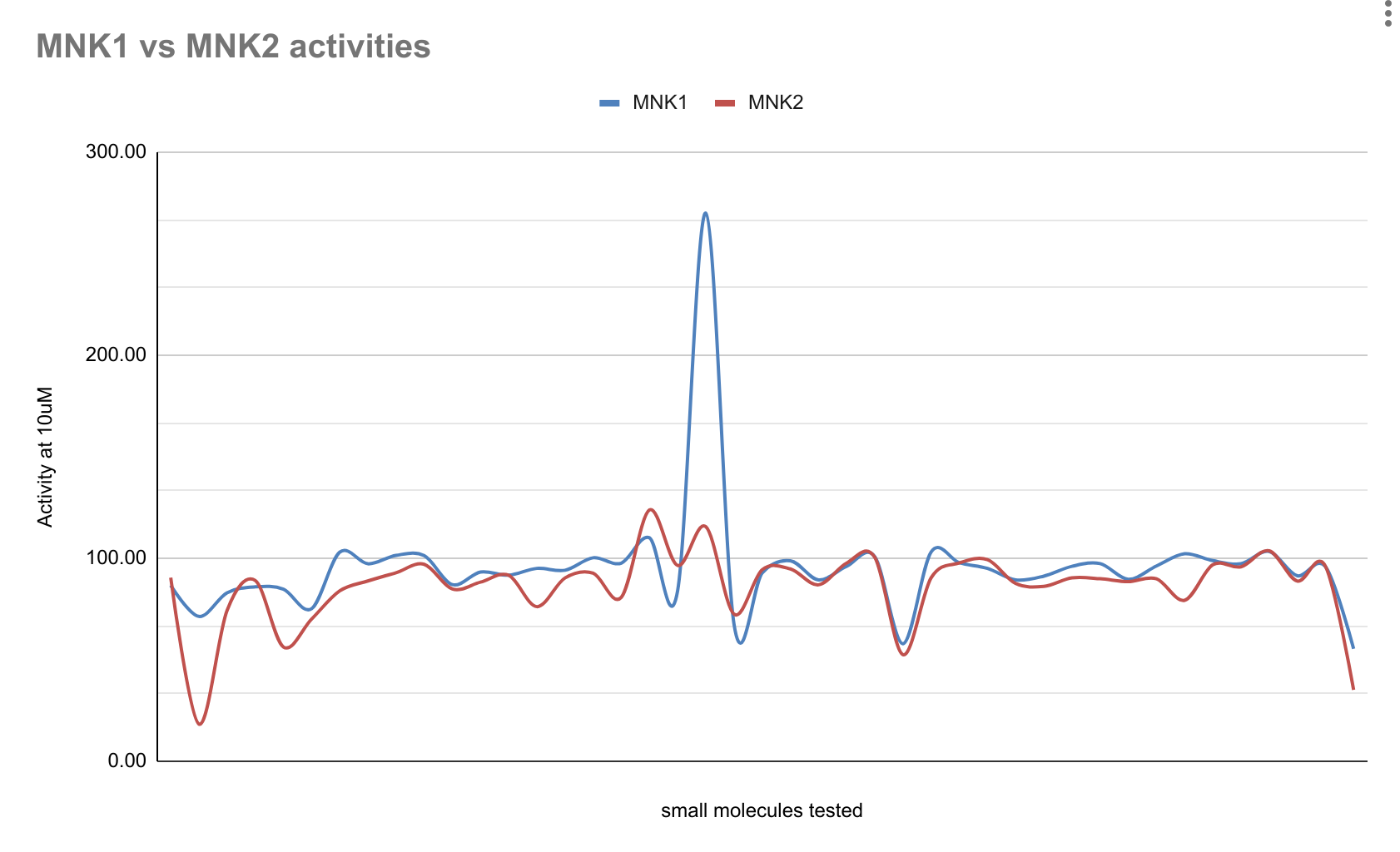}
\caption{Average enzyme activities; lower enzyme activities ($y$-values) indicate better binders}
\label{fig:wetlab-avg}
\end{figure}
\end{document}